\title{Benefits of depth in neural networks}
\def\ddefloop#1{\ifx\ddefloop#1\else\ddef{#1}\expandafter\ddefloop\fi}
\def\ddef#1{\expandafter\def\csname b#1\endcsname{\ensuremath{\mathbf{#1}}}}
\def\ddef#1{\expandafter\def\csname bb#1\endcsname{\ensuremath{\mathbb{#1}}}}
\def\ddef#1{\expandafter\def\csname c#1\endcsname{\ensuremath{\mathcal{#1}}}}
\def\ddef#1{\expandafter\def\csname v#1\endcsname{\ensuremath{\boldsymbol{#1}}}}
\def\ddef#1{\expandafter\def\csname
  v#1\endcsname{\ensuremath{\boldsymbol{\csname #1\endcsname}}}}
\def\1{\mathbf 1}
\def\R{\mathbb R}
\def\fG{\mathfrak G}
\def\Pr{\textup{Pr}}
\newcommand{\ip}[2]{\left\langle #1, #2 \right \rangle}
\numberwithin{equation}{section}
\def\srelu{\sigma_{\textsc{r}}}
\def\Cr{\textup{Cr}}
\def\sgn{\textup{sgn}}
\def\phmax{\phi_{\max}}
\def\phmin{\phi_{\min}}
\def\Sh{\textup{Sh}}
\def\VC{\textup{VC}}
\newenvironment{proofof}[1]{\begin{proof}\textbf{(of {#1})}}{\end{proof}}
\newif\ifarxiv
\begin{document}

\maketitle

\begin{abstract}
  For any positive integer $k$,
  there exist neural networks
  with $\Theta(k^3)$ layers,
  $\Theta(1)$ nodes per layer,
  and $\Theta(1)$ distinct parameters
  which can not be approximated by networks with $\cO(k)$ layers
  unless they are exponentially large --- they must possess $\Omega(2^k)$ nodes.
  This result is proved here for a class of nodes termed \emph{semi-algebraic gates} which includes
  the common choices of ReLU, maximum, indicator, and piecewise polynomial functions, therefore establishing
  benefits of depth against not just standard networks with ReLU gates, but also convolutional networks
  with ReLU and maximization gates,
  sum-product networks,
  and boosted decision trees
  (in this last case with a stronger separation: $\Omega(2^{k^3})$ total tree nodes are required).
   \end{abstract}

%\ifarxiv
%\else
\begin{keywords}
  Neural networks, representation, approximation, depth hierarchy.
\end{keywords}
%\fi

\section{Setting and main results}
\label{sec:intro}

A neural network is a model of real-valued computation defined by a connected directed graph as follows.
Nodes await real numbers on their incoming edges,
thereafter computing a function of these reals and transmitting it along their outgoing edges.
Root nodes apply their computation to a vector provided as input to the network,
whereas internal nodes apply their computation to the output of other nodes.
Different nodes may compute different functions, two common choices being
the maximization gate $v\mapsto \max_i v_i$ (where $v$ is the vector of values on incoming edges),
and the \emph{standard ReLU gate} $v\mapsto \srelu(\ip{a}{v} + b)$
where $\srelu(z) := \max\{0,z\}$ is called the ReLU
(rectified linear unit), and the parameters $a$ and $b$ may vary from node to node.
Graphs in the present work are acyclic,
and there is exactly one node with no outgoing edges
whose computation is the output of the network.

Neural networks distinguish themselves from many other function classes used in machine learning by
possessing multiple \emph{layers}, meaning the output is the result of composing together an arbitrary
number of (potentially complicated) nonlinear operations;
by contrast, the functions computed by boosted decision stumps and SVMs can be written
as neural networks with a constant number of layers.

The purpose of the present work is to show that standard types of networks always gain in representation
power with the addition of layers.
Concretely: it is shown that for every positive integer $k$,
there exist neural networks
with $\Theta(k^3)$ layers,
$\Theta(1)$ nodes per layer,
and $\Theta(1)$ distinct parameters
which can not be approximated by networks with $\cO(k)$ layers
and $o(2^k)$ nodes.

\subsection{Main result}
\label{sec:intro:main}

Before stating the main result, a few choices and pieces of notation deserve explanation.
First, the target many-layered function uses standard ReLU gates;
this is by no means necessary, and a more general statement can be found in \Cref{fact:main:gen}.
Secondly, the notion of approximation is the $L^1$ distance: given two functions $f$ and $g$,
their pointwise disagreement $|f(x) - g(x)|$ is averaged over the cube $[0,1]^d$.
Here as well, the same proofs allow flexibility (cf. \Cref{fact:main:gen}).
Lastly, the shallower networks used for approximation use \emph{semi-algebraic gates},
which generalize the earlier maximization and standard ReLU gates,
and allow for analysis of not just standard networks with ReLU gates,
but convolutional networks with ReLU and maximization gates \citep{imagenet_sutskever},
sum-product networks (where nodes compute polynomials) \citep{poon_domingos_spn},
and boosted decision trees;
the full definition of semi-algebraic gates appears in \Cref{sec:sa}.

\begin{theorem}  \label{fact:main}
  Let any integer $k \geq 1$ and any dimension $d\geq 1$ be given.
  There exists $f:\R^d \to \R$ computed by a neural network with standard ReLU gates
  in $2k^3+8$ layers, $3k^3 + 12$ total nodes, and $4+d$ distinct parameters
  so that
  \[
    \inf_{g \in \cC} \int_{[0,1]^d} |f(x)-g(x)|dx \geq \frac 1 {64},
  \]
  where $\cC$ is the union of the following two sets of functions.
  \begin{itemize}
    \item
      Functions computed by networks of $(t,\alpha,\beta)$-semi-algebraic gates
      in $\leq k$ layers and $\leq 2^{k}/(t\alpha\beta)$ nodes.
      (E.g., as with standard ReLU networks
      or with convolutional neural networks with standard ReLU and maximization gates; cf. \Cref{sec:sa}.)
    \item
      Functions
      computed by linear combinations of $\leq t$ decision trees
      each with $\leq 2^{k^3}/t$ nodes.
      (E.g., the function class used by boosted decision trees;
      cf. \Cref{sec:sa}.)
  \end{itemize}
\end{theorem}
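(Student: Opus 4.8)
The plan is to take $f$ to be (a coordinate restriction of) the $n$-fold composition of a fixed piecewise-affine ``tent'' map for a suitable $n=\Theta(k^3)$, so that $f$ crosses the level $1/2$ some $\Theta(2^{k^3})$ times on $[0,1]$, while every $g\in\cC$ crosses it only $2^{\cO(k^2)}$ (semi-algebraic case) or $2^{k^3}$ (tree case) times along any axis-parallel line; a constant fraction of the ``missed'' crossings then each force a fixed contribution to the $L^1$ error. \emph{Step 1: the target.} Let $\Delta(z)\ceq 2\srelu(z)-4\srelu(z-1/2)+2\srelu(z-1)$, which on $[0,1]$ equals the tent $z\mapsto\min\{2z,2-2z\}$, maps $[0,1]$ onto $[0,1]$, and uses only the constants $\{2,-4,1/2,1\}$; put $f(x)\ceq(\Delta\circ\cdots\circ\Delta)(\ip{e_1}{x})$, the $n$-fold composition in the first coordinate. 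Carrying each intermediate value (which stays in $[0,1]$ for $x\in[0,1]^d$) forward through a $\srelu$ realizes $f$ by a standard-ReLU network, and the additive slack in the stated bounds ($2k^3+8$ layers, $3k^3+12$ nodes, $4+d$ distinct parameters, the weight $e_1\in\R^d$ accounting for the ``$d$'') lets one fix $n=k^3+\Theta(1)$. The only property of $f$ used below is classical: $\Delta^{\circ n}$ is piecewise affine with $2^n$ pieces on $[0,1]$, forming $2^{n-1}$ congruent teeth of width $2^{-(n-1)}$, each rising linearly from $0$ to $1$ and back; hence the $2^n$ points where $f=1/2$ cut $[0,1]$ into $2^n-1$ middle intervals (plus two end scraps), alternately ``caps'' ($f\ge1/2$) and ``cups'' ($f\le1/2$), each of width $2^{-n}$ with $\int|f-1/2|=2^{-n}/4$.

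\emph{Step 2: every $g\in\cC$ oscillates slowly.} Fix $x'\in[0,1]^{d-1}$ and set $g_{x'}(z)\ceq g(z,x')$. If $g$ is a network of $(t,\alpha,\beta)$-semi-algebraic gates in $\le k$ layers with $\le 2^k/(t\alpha\beta)$ nodes, the structural bound of \Cref{sec:sa} makes $g_{x'}$ piecewise polynomial with so few pieces and so bounded a degree that $g_{x'}-1/2$ has at most $q=2^{\cO(k^2)}$ sign changes --- the decisive point being that $k$ composed layers buy only $\exp(\mathrm{poly}(k))$ crossings, the exponent being one degree of $k$ smaller than the $\Theta(k^3)$ realized by the target. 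If instead $g=\sum_{i\le t}c_iT_i$ with each decision tree $T_i$ of $\le 2^{k^3}/t$ nodes (hence $\le 2^{k^3}/t$ leaves), then $g_{x'}$ is piecewise constant with $\le\sum_i 2^{k^3}/t=2^{k^3}$ pieces, so again $q\le 2^{k^3}$. In both cases the additive constants chosen in Step 1 are such that $q\le 2^{n-3}$ for every $k\ge1$.

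\emph{Step 3: a constant error on every line, then Fubini.} This is the crux. Of the $2^n-1$ cap/cup intervals, at most $q$ contain a sign change of $g_{x'}-1/2$; call the other $\ge 2^n-1-q$ intervals \emph{clean} and group them into $\le q+1$ maximal runs of consecutive clean intervals. On a run $g_{x'}$ stays on a single side of $1/2$, whereas $f$ switches sides between consecutive intervals, so on at least $\lfloor\ell/2\rfloor$ of a run's $\ell$ intervals $f$ and $g_{x'}$ lie strictly on opposite sides of $1/2$, and there $|f-g_{x'}|\ge|f-1/2|$ pointwise. Summing over runs, the number of these ``opposing'' intervals is at least $\tfrac12\big((2^n-1-q)-(q+1)\big)=2^{n-1}-1-q\ge 2^{n-2}$, so
\[
  \int_0^1|f(z)-g_{x'}(z)|\,dz\;\ge\;2^{n-2}\cdot\frac{2^{-n}}{4}\;=\;\frac1{16}\;\ge\;\frac1{64}.
\]
Since this holds for every $x'$ and $f$ depends on $x_1$ alone, Fubini gives $\int_{[0,1]^d}|f-g|\ge 1/64$, and taking the infimum over $g\in\cC$ finishes the proof; the gap between $1/16$ and $1/64$ leaves room for cruder bookkeeping of polynomial degrees in the semi-algebraic case.

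\emph{Where the difficulty lies.} Step 1 is routine and Step 3 is elementary pigeonhole once the sawtooth's sign-alternation structure is in hand. The substance is Step 2: showing, uniformly across the semi-algebraic class and its specializations (convolutional networks with ReLU and maximization gates, sum-product networks, boosted decision trees), that $k$ layers of $(t,\alpha,\beta)$-gates produce only $\exp(\mathrm{poly}(k))$ crossings of a level --- which requires a careful layer-by-layer induction simultaneously tracking the number of polynomial pieces and their degrees under composition. That counting lemma, proved in \Cref{sec:sa}, is the real engine of the separation; everything here merely converts an oscillation gap into an $L^1$ gap.
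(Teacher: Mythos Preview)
Your approach is essentially identical to the paper's: the target is the iterated tent map (the paper uses $h(z)=\srelu(2\srelu(z)-4\srelu(z-1/2))$ composed $n=k^3+4$ times, giving exactly the stated $2k^3+8$ layers and $3k^3+12$ nodes), the $L^1$ lower bound comes from a pigeonhole on sign-alternating intervals (your Step~3 is precisely the content of \Cref{fact:mono:crossing_lb:2}), and the multivariate case is handled by Fubini over the slices $p_y(z)=(z,y)$.

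There is, however, a quantitative error in Step~2 that you should correct, because it misrepresents why the separation actually holds. You claim the semi-algebraic crossing bound is $q=2^{\cO(k^2)}$, with the exponent ``one degree of $k$ smaller'' than the target's $\Theta(k^3)$. This is false in general: \Cref{fact:sa_crossing} (which lives in \Cref{sec:apx:lb}, not \Cref{sec:sa}) gives $\Cr(g\circ p_y)\le 2(2tm\alpha/l)^l\beta^{l^2}$, and with $l=k$ and $m\le 2^k/(t\alpha\beta)$ the factor $\beta^{k^2}$ alone can be as large as $2^{k^3}$, since nothing in the hypotheses bounds $\beta$ by a constant --- all you know is $tm\alpha\beta\le 2^k$. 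The paper's computation yields $q\le 2^{k^3+2}$, so the gap between the target's $\Cr(f_0)=2^{k^3+4}+1$ and the approximant's crossing number is only a constant factor of roughly $4$, not a whole degree of $k$ in the exponent. This is exactly why $n=k^3+4$ (rather than, say, $n=k^3$) is needed: the additive $+4$ is doing real work, not absorbing slack. With the correct bound $q\le 2^{k^3+2}=2^{n-2}$ your claimed inequality $q\le 2^{n-3}$ fails; the weaker estimate $2^{n-1}-1-q\ge 2^{n-2}-1$ still gives $(2^{n-2}-1)\cdot 2^{-n}/4 > 1/64$, so the argument survives, but only once the constants are tracked honestly rather than dismissed as ``cruder bookkeeping.''
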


Analogs to \Cref{fact:main} for boolean circuits --- which have boolean inputs
routed through $\{\textup{and},\textup{or},\textup{not}\}$ gates --- have been studied extensively
by the circuit complexity community, where they are called \emph{depth hierarchy theorems}.
The seminal result, due to \citet{hastad_thesis}, establishes the inapproximability
of the parity function by shallow circuits (unless their size is exponential).
Standard neural networks appear to have received less study;
closest to the present work is an investigation by
\citet{ohad_nn_apx} analyzing the case $k=2$ when the dimension $d$ is large,
showing an exponential separation between 2- and 3-layer networks, a regime not handled by \Cref{fact:main}.
Further bibliographic notes and open problems may be found in \Cref{sec:bib}.

The proof of \Cref{fact:main} (and of the more general \Cref{fact:main:gen}) occupies \Cref{sec:apx}.
The key idea is that just a few function compositions (layers) suffice to construct a highly oscillatory function,
whereas function addition (adding nodes but keeping depth fixed) gives a function with few oscillations.
Thereafter, an elementary counting argument suffices to show that low-oscillation functions can not approximate
high-oscillation functions.

\subsection{Companion results}
\label{sec:intro:companion}

\Cref{fact:main} only provides the existence of \emph{one}
network (for each $k$) which can not be approximated by
a network with many fewer layers.  It is natural to wonder if there are \emph{many} such
special functions.  The following bound indicates their population is in fact quite modest.

Specifically, the construction behind \Cref{fact:main}, as elaborated in \Cref{fact:main:gen},
can be seen as exhibiting $\cO(2^{k^3})$ points, and a fixed labeling of these points, upon which a shallow network hardly improves
upon random guessing.  The forthcoming \Cref{fact:sa_fit_few} similarly shows that even on the more simpler task of fitting $\cO(k^9)$ points,
the earlier class of networks is useless on most random labellings.

In order to state the result, a few more definitions are in order.
Firstly, for this result, the notion of neural network is more restrictive.
Let a \emph{neural net graph $\fG$} denote not only the graph structure (nodes and edges),
but also an assignment of gate functions to nodes, of edges to the inputs of gates,
and an assignment of free parameters $w\in\R^p$ to the parameters of the gates.
Let $\cN(\fG)$ denote the class of functions obtained by varying the free parameters;
this definition is fairly standard, and is discussed in more detail in \Cref{sec:sa}.
As a final piece of notation, given a function $f:\R^d\to\R$, let $\tilde f : \R^d \to \{0,1\}$ denote
the corresponding classifier $\tilde f(x) := \1[f(x) \geq 1/2]$.

\begin{theorem}
  \label{fact:sa_fit_few}
  Let any neural net graph $\fG$ be given with $\leq p$ parameters in $\leq l$ layers and $\leq m$ total
  $(t,\alpha,\beta)$-semi-algebraic nodes.
  Then for any $\delta > 0$
  and any
  $n \geq 8pl^2 \ln(8emt \alpha\beta p(l+1)) + 4\ln(1/\delta)$ points $(x_i)_{i=1}^n$,
  with probability $\geq 1-\delta$ over uniform random labels $(y_i)_{i=1}^n$,
  \[
    \inf_{f\in\cN(\fG)}  \frac 1 n \sum_{i=1}^n \1[\tilde f(x_i) \neq y_i]
    \geq \frac 1 4.
  \]
\end{theorem}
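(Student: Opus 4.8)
The plan is to show that the class of classifiers $\{\tilde f : f \in \cN(\fG)\}$ has VC dimension at most roughly $pl^2 \ln(\cdots)$, and then invoke a standard probabilistic argument: a class of bounded VC dimension cannot fit a random labelling of substantially more points than its VC dimension, so with high probability the best achievable empirical error on $n$ random labels is close to $1/2$, in particular at least $1/4$.

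First I would bound the number of sign patterns (equivalently, a uniform bound on the number of distinct classifiers restricted to $n$ points) that $\cN(\fG)$ can realize. This is where the semi-algebraic structure enters: each $(t,\alpha,\beta)$-semi-algebraic gate, by definition (\Cref{sec:sa}), partitions its input space using at most $t$ polynomial inequalities of degree at most $\alpha$ in at most $\beta$ variables — so, as a function of the $p$ free parameters $w$ and of a fixed input $x_i$, the output of each gate is piecewise of a controlled algebraic form. Composing through $l$ layers, the map $w \mapsto \tilde f(x_i)$ is determined by the signs of polynomially many polynomials in $w$ of degree bounded by something like $\alpha^l$ or $(\alpha\beta)^{O(l)}$; across all $n$ points this gives a total of $N \le n m t$ polynomials, each in $p$ variables of degree $D$ with $\ln D = O(l \ln(\alpha\beta))$. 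The classical Warren/Milnor bound (or the form used in \citet{anthony_bartlett}) then says the number of realizable sign patterns — hence the number of distinct classifiers on the sample — is at most $(eND/p)^{p} = \exp\!\big(O(p l \ln(nmt\alpha\beta p))\big)$; more carefully tracking constants yields something matching the exponent $8pl^2\ln(8emt\alpha\beta p(l+1))$ in the hypothesis, with the extra factor of $l$ coming from iterating the degree blow-up.

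Next, the counting step. Fix the $n$ points. By the growth-function bound just established, $\cN(\fG)$ realizes at most $\Pi := \exp\!\big(n/(8) \cdot (\text{something} < 1)\big)$ distinct $\pm1$ patterns on the sample — concretely $\Pi \le 2^{n/8}$ once $n$ exceeds the stated threshold, since that threshold is exactly $8 \times (\ln \Pi + \tfrac12\ln(1/\delta))$ up to the arrangement of constants. For any one fixed pattern $u \in \{0,1\}^n$, the fraction of label vectors $y$ with Hamming agreement exceeding $3/4$ with $u$ is, by a Chernoff/Hoeffding bound on $n$ fair coins, at most $\exp(-n/8)$. Union-bounding over the at most $\Pi$ patterns that $\cN(\fG)$ can produce, the probability that \emph{some} $f \in \cN(\fG)$ achieves empirical error $< 1/4$ is at most $\Pi \cdot e^{-n/8} \le e^{-n/16 + \ln\Pi}$… — I would instead split the $n/8$ budget in the hypothesis as $\ln\Pi$ absorbing the $8pl^2\ln(\cdots)$ term and $\tfrac14\ln(1/\delta)$ absorbing the tail, so that $\Pi \cdot e^{-n/8} \le \delta$, giving the claimed high-probability statement. (One must be a little careful that $\tilde f$ uses the threshold $1/2$; but $\tilde f$ is a function of the sign of $f(x_i) - 1/2$, which is just one more polynomial sign per point and only changes constants.)

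The main obstacle is the growth-function bound — specifically, getting the \emph{degree} of the defining polynomials under control through $l$ compositions of semi-algebraic gates, and then converting a bound on the number of connected components of a polynomial arrangement into a bound on the number of classifier patterns with the right dependence ($pl^2$ rather than, say, $p^2 l$ or $p l \alpha^l$ appearing in the $\ln$). The semi-algebraic gate formalism of \Cref{sec:sa} should make the per-layer bookkeeping mechanical: each layer multiplies the degree by at most $\alpha$ and multiplies the number of polynomials by a factor depending on $t,\beta$, and the number of parameters entering any single output is still at most $p$. Once the degree is shown to be at most $(\alpha\beta)^{O(l)}$ with $m t$-many polynomials total, $\ln(\text{degree} \times \text{count}) = O(l\ln(\alpha\beta) + \ln(mt))$, and plugging into $p \cdot \ln(eND/p)$ with $N \le nmt$ reproduces the form $pl \cdot \ln(nmt\alpha\beta p(l+1))$; the remaining factor of $l$ and the constant $8$ are slack I would keep deliberately loose rather than optimize.
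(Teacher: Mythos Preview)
Your overall plan matches the paper's exactly: bound the growth function $\Sh(\cN(\fG);n)$ via a Warren-type sign-pattern count (the paper's \Cref{fact:pp_vc}), then combine Hoeffding with a union bound over the realized dichotomies (the paper's \Cref{fact:vc_lb}).

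One step in your sketch would not close as written. You propose a single application of the Warren bound with ``$N \le nmt$ polynomials of degree $D = (\alpha\beta)^{O(l)}$,'' yielding $(eND/p)^p$ sign patterns --- exponent $p$, not $p(l+1)$. The difficulty is that the layer-$(i{+}1)$ predicate polynomials are compositions $q(w, f_1(w), \ldots)$ where the $f_j$ are only \emph{piecewise} polynomial in $w$, with pieces determined by the layer-$i$ predicates; so the layer-$(i{+}1)$ predicates are \emph{different polynomials in different regions} of parameter space, and there is no global list of $nmt$ polynomials whose signs determine everything. The paper (following \citealp[Theorem 8.8]{anthony_bartlett_nn}) therefore inducts layer by layer: within each cell of the current partition $\cS_i$ of $\R^p$ it applies Warren to the $\le nm_{i+1}t$ fresh predicates there, multiplying $|\cS_i|$ by at most a factor $(8enm_{i+1}t\alpha\beta^{i}/p)^p$. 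Iterating $l$ times and once more for the final output sign gives $\Sh \le (8enmt\alpha\beta^{l})^{p(l+1)}$, and it is this exponent $p(l+1)$ --- not $p$ --- together with the $\beta^l$ inside the logarithm that yields the $pl^2$ in the hypothesis. Your phrase ``the remaining factor of $l$'' is thus not slack in the constants but a structural feature of the layered argument.

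A second minor point: $\ln\Sh$ contains a $\ln n$ term, so it cannot be compared directly to the $n$-free threshold $8pl^2\ln(8emt\alpha\beta p(l+1))$. The paper absorbs this via the standard $n = n/2 + n/2$ trick, using half of $n$ to dominate $2p(l+1)\ln n$; you will need the same device when you try to verify $\Pi\cdot e^{-n/8} \le \delta$.
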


This proof is a direct corollary of the VC dimension of semi-algebraic networks,
which in turn can be proved by a small modification of the VC dimension proof for
piecewise polynomial networks \citep[Theorem 8.8]{anthony_bartlett_nn}.
Moreover, the core methodology for VC
dimension bounds of neural networks is due to \citeauthor{warren},
whose goal was an analog of \Cref{fact:sa_fit_few} for polynomials \citep[Theorem 7]{warren}.

\ifarxiv
\begin{lemma}[name={Simplification of \Cref{fact:pp_vc}}]
\else
\begin{lemma}[Simplification of \Cref{fact:pp_vc}]
\fi
  \label[lemma]{fact:vc_sa}
  Let any neural net graph $\fG$ be given with $\leq p$ parameters in $\leq l$ layers and $\leq m$ total nodes,
  each of which is $(t,\alpha,\beta)$-semi-algebraic.
  Then
  \[
    \VC(\cN(\fG)) \leq
    6p(l+1)\big(
      \ln(2p(l+1)) + \ln(8emt\alpha) + l\ln(\beta)
    \big).
  \]
\end{lemma}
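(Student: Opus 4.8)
The plan is to follow the classical template for bounding VC dimension of piecewise-polynomial networks \citep[Theorem 8.8]{anthony_bartlett_nn}, adapted to semi-algebraic gates. Recall that a $(t,\alpha,\beta)$-semi-algebraic gate is (roughly) one whose graph can be carved into $\leq t$ pieces by polynomial inequalities of degree $\leq \alpha$, on each of which the gate is a polynomial of degree $\leq \beta$ (the precise definition is in \Cref{sec:sa}; I would quote whichever formulation is stated there). The key quantity is the number of distinct sign patterns realizable by the relevant polynomials as the $p$ free parameters $w\in\R^p$ vary, over a fixed set of $n$ input points $x_1,\dots,x_n$. By the Milnor--Thom / Warren bound \citep{warren}, if we have $N$ polynomials in $p$ variables each of degree $\leq D$, the number of sign patterns is at most $(8eND/p)^p$ (for $N \geq p$); VC dimension is then bounded by the largest $n$ for which this count is $\geq 2^n$, i.e.\ roughly $n \leq 2p\log_2(8eND/p)$.

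First I would fix the $n$ points and track, layer by layer, the polynomials in $w$ that determine the network's output on all points. At layer $i$, for each of the $n$ points and each node, the "piece selection" of a semi-algebraic gate contributes $\leq t$ polynomial inequalities in the node's inputs; since by induction the node's inputs are polynomials in $w$ of degree $\leq \beta^{i-1}$ (degrees multiplying through the $\leq l$ layers, since each layer composes a degree-$\leq\beta$ polynomial on top of the previous ones), the boundary polynomials for that node have degree $\leq \alpha\beta^{i-1}$ in $w$, and the node's own output polynomial has degree $\leq \beta^{i}$. Second, I would sum over all nodes and points: the total number of polynomials arising across the whole network on all $n$ points is $\leq n\cdot m\cdot t$ (plus the $\leq n$ polynomials from the final classifier threshold, which is absorbed), and each has degree $\leq \alpha\beta^{l}$ in the $p$ parameters — more carefully, stratifying by layer gives the mixed bound that appears in the statement, where the $\ln(\beta)$ term is multiplied by $l$ because the worst-case degree is $\alpha\beta^l$ and $\ln(\alpha\beta^l) = \ln\alpha + l\ln\beta$.

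Third, I would invoke Warren's lemma: the number of sign vectors these $\leq nmt$ polynomials (of degree $\leq \alpha\beta^l$ in $p$ variables) can realize is at most $(8e\,nmt\,\alpha\beta^l / p)^p$ once $n$ is at least a small multiple of $p(l+1)$. For $\cN(\fG)$ to shatter the $n$ points we need this count $\geq 2^n$, yielding $n \leq p\log_2(8enmt\alpha\beta^l/p)$. Fourth, the remaining work is purely the standard self-referential inequality: $n$ appears on both sides (inside the log on the right), so I would use the elementary fact that $n \leq a\ln(bn)$ implies $n \leq 2a\ln(ab)$ (or the version in \citealp[Lemma]{anthony_bartlett_nn}), plug in $a \asymp p(l+1)$, $b \asymp mt\alpha\beta^l$, and simplify $\ln(\beta^l) = l\ln\beta$ to land exactly on $6p(l+1)(\ln(2p(l+1)) + \ln(8emt\alpha) + l\ln\beta)$; the constant $6$ and the grouping of terms are just bookkeeping in this final step.

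The main obstacle is not any single hard inequality but getting the degree accounting exactly right through the composition: one must be careful that a semi-algebraic gate's piece-boundaries are polynomials in the gate's \emph{inputs} (hence degree $\alpha$ in those), which are themselves degree-$\leq\beta^{i-1}$ polynomials in $w$, so the boundaries are degree $\leq\alpha\beta^{i-1}$ in $w$ — and that the "free parameters" live only at the gates, so the input points contribute constants, not variables, to these polynomials. Matching \citet{anthony_bartlett_nn}'s Theorem 8.8 proof step for step while replacing their "piecewise polynomial with $p$ pieces of degree $d$" by "$(t,\alpha,\beta)$-semi-algebraic" is the essence of the argument; everything downstream is the routine Warren-bound-plus-log-inversion computation that I would not grind through here.
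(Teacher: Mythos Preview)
Your overall approach matches the paper's (which proves this lemma simply by citing the more detailed \Cref{fact:pp_vc}): layer-by-layer degree tracking, a Warren-type sign-pattern count, and a final log inversion. However, step~3 of your plan contains a real gap. You assert there are $\leq nmt$ polynomials in $w$ of degree $\leq \alpha\beta^l$ and apply Warren once to obtain $(8e\,nmt\,\alpha\beta^l/p)^p$ sign patterns, hence $n \leq p\log_2(\cdots)$. But the layer-$(i{+}1)$ boundary ``polynomials'' are not globally polynomial in $w$: a node at layer $i{+}1$ composes its gate with the layer-$i$ outputs, which are only \emph{piecewise} polynomial in $w$, with pieces determined by the layer-$i$ boundaries. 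Thus the layer-$(i{+}1)$ predicates are \emph{different} polynomials on different regions of parameter space, and there is no single global list of $nmt$ polynomials to which Warren applies.

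The paper's proof of \Cref{fact:pp_vc} handles this by iterated refinement: it maintains a partition $\cS_i$ of $\R^p$ on whose cells every layer-$i$ node is an honest polynomial, and then, \emph{within each cell} $S\in\cS_i$, applies the Warren-type count (via \Cref{fact:poly_counts}) to the $\leq n m_{i+1} t$ layer-$(i{+}1)$ predicate polynomials of degree $\leq \alpha\beta^i$, refining to $\cS_{i+1}$ with $|\cS_{i+1}| \leq |\cS_i|\cdot(\cdots)^p$. Iterating over $l$ layers and once more for the output threshold yields $\Sh(\cN(\fG);n) \leq (8enmt\alpha\beta^l)^{p(l+1)}$. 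It is this per-layer multiplication of a factor with exponent $p$ --- not merely the degree being $\alpha\beta^l$ --- that produces the $(l{+}1)$ you insert as ``$a \asymp p(l{+}1)$'' in step~4 without having derived it (your own step~3 gave $a = p$). Once the shatter bound carries the correct exponent $p(l{+}1)$, your step-4 log inversion is exactly right and lands on the stated constant.
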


The proof of \Cref{fact:sa_fit_few} and \Cref{fact:vc_sa} may be found in \Cref{sec:vc}.
The argument for the VC dimension is very close to the argument for \Cref{fact:main}
that a network with few layers has few oscillations; see \Cref{sec:vc} for further discussion
of this relationship.

\section{Semi-algebraic gates and assorted network notation}
\label{sec:sa}

The definition of a semi-algebraic gate is unfortunately complicated;
it is designed to capture a few standard nodes in a single abstraction
without degrading the bounds.
Note that the name \emph{semi-algebraic set} is standard \citep[Definition 2.1.4]{bochnak_coste_roy},
and refers to a set defined by unions and intersections of polynomial inequalities
(and thus the name is somewhat abused here).

\begin{definition}
  A function $f:\R^k\to\R$ is \emph{$(t,\alpha,\beta)$-sa ($(t,\alpha,\beta)$-semi-algebraic)}
  if there exist $t$ polynomials $(q_i)_{i=1}^t$ of degree $\leq\alpha$,
  and $m$ triples $(U_j,L_j,p_j)_{j=1}^m$ where $U_j$ and $L_j$ are subsets
  of $[t]$ (where $[t] := \{1,\ldots,t\}$) and $p_j$ is a polynomial of degree $\leq\beta$, such that
  \[
    f(v) = \sum_{j=1}^m p_j(v)
    \left(\prod_{i \in L_j} \1[q_i(v) < 0]\right)
    \left(\prod_{i \in U_j} \1[q_i(v) \geq 0]\right).
  \]
\end{definition}

A notable trait of the definition is that the number of terms $m$ does not need to enter
the name as it does not affect any of the complexity estimates herein (e.g., \Cref{fact:main} or \Cref{fact:sa_fit_few}).

Distinguished special cases of semi-algebraic gates are as follows in \Cref{fact:sa:ex}.
The standard piecewise polynomial gates generalize the ReLU and have received a fair bit of attention
in the theoretical community \citep[Chapter 8]{anthony_bartlett_nn};
here a function $\sigma:\R\to\R$ is \emph{$(t,\alpha)$-poly}
if $\R$ can be partitioned into $\leq t$ intervals
so that $\sigma$ is a polynomial of degree $\leq \alpha$ within each piece.
The maximization and minimization gates have become popular due to their use in convolutional
networks \citep{imagenet_sutskever}, which will be discussed more in \Cref{sec:sa:nn}.
Lastly, decision trees and boosted decision trees are practically successful classes usually viewed as competitors
to neural networks \citep{caruana_empirical}, and have the following structure.

\begin{definition}
  A \emph{$k$-dt (decision tree with $k$ nodes)} is defined recursively as follows.
  If $k=1$, it is a constant function.
  If $k > 1$, it first evaluates $x\mapsto \1[\ip{a}{x} - b \geq 0]$,
  and thereafter conditionally evaluates either
  a left $l$-dt or a right $r$-dt where $l+r<k$.
  A \emph{$(t,k)$-bdt} (boosted decision tree)
  evaluates $x\mapsto \sum_{i=1}^t c_i g_i(x)$ where each $c_i\in \R$ and each $g_i$ is a $k$-dt.
\end{definition}

\begin{lemma}[Example semi-algebraic gates]
  \label[lemma]{fact:sa:ex}
  \begin{enumerate}
    \item
      If $\sigma :\R\to\R$ is $(t,\beta)$-poly and $q:\R^d\to\R$ is a polynomial of degree $\alpha$,
      then the standard piecewise polynomial gate $\sigma\circ q$ is $(t,\alpha,\alpha\beta)$-sa.
      In particular, the standard ReLU gate $v\mapsto \srelu(\ip{a}{v} + b)$ is $(1,1,1)$-sa.
    \item
      Given polynomials $(p_i)_{i=1}^r$ of degree $\leq\alpha$,
      the standard $(r,\alpha)$-min and -max gates $\phmin(v) := \min_{i\in[r]} p_i(v)$
      and $\phmax(v) := \max_{i\in[r]} q_i(v)$
      are $(r(r-1), \alpha, \alpha)$-sa.
    \item
      Every $k$-dt is $(k,1,0)$-sa,
      and every $(t,k)$-bdt is $(tk,1,0)$.
  \end{enumerate}
\end{lemma}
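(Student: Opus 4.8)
The plan is to prove each of the three claims in \Cref{fact:sa:ex} directly from the definition of $(t,\alpha,\beta)$-semi-algebraic functions, essentially by exhibiting the requisite polynomials $(q_i)$ and triples $(U_j,L_j,p_j)$ in each case. For item 1, I would first handle the scalar piecewise-polynomial $\sigma$: since $\sigma$ is $(t,\beta)$-poly, partition $\R$ into $\leq t$ intervals with breakpoints $z_1 \leq \cdots \leq z_{t-1}$; on the interval containing $z$ the function equals some polynomial of degree $\leq \beta$. To get a semi-algebraic representation of $\sigma\circ q$, take the $t$ "threshold" polynomials $q_i(v) := q(v) - z_i$, each of degree $\leq \alpha$ (here is where the degree-$\alpha$ bound on the first polynomial family enters), and for each of the $\leq t$ pieces write the corresponding term $p_j \circ q$ times the product of indicators $\1[q_i(v) \geq 0]$ and $\1[q_i(v) < 0]$ selecting that interval; each $p_j\circ q$ has degree $\leq \alpha\beta$. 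Thus $\sigma\circ q$ is $(t,\alpha,\alpha\beta)$-sa. The ReLU special case is then just $t=1$, $\alpha=1$, $\beta=1$ (one breakpoint at $0$, pieces $0$ and $z$, affine argument), confirming $(1,1,1)$-sa.

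For item 2, I would express $\max$ and $\min$ over $r$ polynomials via pairwise comparisons. For $\phmax(v) = \max_{i} p_i(v)$: the value equals $p_j(v)$ precisely on the region where $p_j(v) \geq p_i(v)$ for all $i \neq j$. The natural threshold polynomials are the $r(r-1)$ differences $q_{(i,i')}(v) := p_i(v) - p_{i'}(v)$ for $i \neq i'$, each of degree $\leq \alpha$; then write $\phmax(v) = \sum_{j=1}^{r} p_j(v)\prod_{i\neq j}\1[p_j(v) - p_i(v) \geq 0]$, which is a sum of $r \leq r(r-1)$ terms (for $r\geq 2$; the $r=1$ case is trivial) with polynomials of degree $\leq\alpha$ and indicators of polynomials of degree $\leq\alpha$, hence $(r(r-1),\alpha,\alpha)$-sa. (Ties are broken consistently by, e.g., preferring the smallest such index; one checks this double-counts measure-zero sets at worst and the displayed formula still evaluates correctly as long as exactly one branch uses a $\geq$ where others use $<$ — alternatively, encode strict inequalities against lower-indexed competitors.) The minimization gate is handled identically with the inequalities reversed, or simply by $\min_i p_i = -\max_i(-p_i)$.

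For item 3, I would argue by induction on the number of nodes $k$ of a decision tree. A $1$-dt is a constant, which is trivially $(1,1,0)$-sa (one term, the constant polynomial of degree $0$, with empty index sets — or a vacuous threshold). For $k>1$, a $k$-dt evaluates $\1[\ip{a}{x}-b\geq 0]$ and then runs an $l$-dt or an $r$-dt with $l+r<k$; by the inductive hypothesis these are $(l,1,0)$-sa and $(r,1,0)$-sa using threshold polynomials of degree $\leq 1$ and term polynomials of degree $\leq 0$ (constants). Writing $h(x) := \ip{a}{x}-b$, the whole tree is $\1[h(x)\geq 0]\,(\text{left-dt}) + \1[h(x) < 0]\,(\text{right-dt})$; distributing, every term picks up one extra indicator $\1[h\geq 0]$ or $\1[h<0]$ against the degree-$\leq 1$ polynomial $h$, the term polynomials remain constants (degree $\leq 0$), and the total number of distinct threshold polynomials is at most $1 + l + r \leq k$. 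Hence every $k$-dt is $(k,1,0)$-sa. For a $(t,k)$-bdt $\sum_{i=1}^t c_i g_i$, sum the $t$ representations: the thresholds collate to $\leq tk$ polynomials of degree $\leq 1$, term polynomials stay constant, giving $(tk,1,0)$-sa.

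The only genuinely delicate point is tie-breaking in item 2 (and, implicitly, the boundary conventions $<$ versus $\geq$ throughout): one must ensure the displayed sum-of-products formula evaluates to the correct value at \emph{every} point, including where several $p_i$ coincide, not merely almost everywhere — the semi-algebraic definition demands a pointwise identity. I expect this to be routine once the index-ordering convention is fixed (each term's indicator product uses $\geq$ against strictly-lower-indexed rivals and a strict/non-strict split that makes the active region a genuine partition), but it is the step where care is needed; everything else is bookkeeping of degrees and term counts.
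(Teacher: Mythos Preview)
Your proposal is correct and follows essentially the same route as the paper: item~1 via breakpoint-threshold polynomials $q(v)-z_i$, item~2 via pairwise differences with an index-ordered tie-breaking convention (the paper uses exactly $\prod_{j<i}\1[p_i>p_j]\prod_{j>i}\1[p_i\geq p_j]$, which is the fix you anticipate), and item~3 by the same induction on tree size. One minor bookkeeping point: in item~1, $(t,\beta)$-poly means $t$ \emph{pieces} and hence $t-1$ breakpoints, so the general statement applied to the ReLU (which is $(2,1)$-poly) yields $(2,1,1)$-sa; the sharper $(1,1,1)$-sa claim is a separate direct observation (one predicate $q_1(v)=\ip{a}{v}+b$, one term $p_1=q_1$) rather than an instance of the general bound.
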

The proof of \Cref{fact:sa:ex} is mostly a matter of unwrapping definitions, and is deferred to
\Cref{sec:proofs}.
Perhaps the only interesting encoding is for the maximization gate (and similarly the minimization gate),
which uses $\max_{i} v_i = \sum_i v_i (\prod_{j < i} \1[v_i > v_j])(\prod_{j > i} \1[v_i \geq v_j])$.

\subsection{Notation for neural networks}
\label{sec:sa:nn}

A semi-algebraic gate is simply a function from some domain to $\R$,
but its role in a neural network is more complicated as the domain of the function must be
partitioned into arguments of three types: the input $x\in\R^d$ to the network, the parameter vector $w\in\R^p$,
and a vector of real numbers coming from parent nodes.

As a convention, the input $x\in\R^d$ is only accessed by the root nodes (otherwise ``layer'' has no meaning).
For convenience, let layer 0 denote the input itself: $d$ nodes where node $i$ is the map $x\mapsto x_i$.
The parameter vector $w\in\R^p$ will be made available to all nodes in layers above 0, though they might only
use a subset of it.  Specifically,
an internal node computes a function $f:\R^p\times\R^d \to \R$ using parents $(f_1,\ldots,f_k)$
and a semi-algebraic gate $\phi : \R^p \times \R^k \to \R$, meaning
$f(w,x) := \phi(w_1,\ldots,w_p, f_1(w,x), \ldots, f_k(w,x))$.
Another common practice is to have nodes apply a univariate \emph{activation function} to
an affine mapping of their parents (as with piecewise polynomial gates in \Cref{fact:sa:ex}),
where the weights in the affine combination are the parameters to the network, and additionally
correspond to edges in the graph.
It is permitted for the same parameter to appear multiple times in a network, which explains how
the number of parameters in \Cref{fact:main} can be less than the number of edges and nodes.
The entire network computes some function $F_\fG:\R^p\times\R^d\to \R$, which is equivalent to the
function computed by the single node with no outgoing edges.

As stated previously, $\fG$ will denote not just the graph (nodes and edges) underlying a network,
but also an assignment of gates to nodes, and how parameters and parent outputs are plugged into
the gates (i.e., in the preceding paragraph, how to write $f$ via $\phi$).
$\cN(\fG)$ is the set of functions obtained by varying $w\in\R^p$,
and thus $\cN(\fG) := \{ F_\fG(w,\cdot) : w\in\R^p\}$ where $F_\fG$ is the function defined as above,
corresponding to computation performed by $\fG$.
The results related to VC dimension, meaning \Cref{fact:sa_fit_few} and \Cref{fact:vc_sa},
will use the class $\cN(\fG)$.

Some of the results, for instance \Cref{fact:main} and its generalization \Cref{fact:main:gen},
will let not only the parameters but also network graph $\fG$ vary.
Let $\cN_d((m_i, t_i,\alpha_i,\beta_i)_{i=1}^l)$ denote a network where layer $i$ has $\leq m_i$ nodes
where each is $(t_i,\alpha_i,\beta_i)$-sa and the input has dimension $d$.
As a simplification, let $\cN_d(m,l, t,\alpha,\beta)$ denote networks of $(t,\alpha,\beta)$-sa gates in $\leq l$ layers (not including layer 0)
each with $\leq m$ nodes.
There are various empirical prescriptions on how to vary the number of nodes per layer;
for instance, convolutional networks typically have an increase between layer 0 and layer 1,
followed by exponential decrease for a few layers, and finally a few layers with the same number of nodes
\citep{fukushima_convnet,lecun_convnet,imagenet_sutskever}.

\section{Benefits of depth}
\label{sec:apx}

The purpose of this section is to prove \Cref{fact:main} and its generalization \Cref{fact:main:gen}
in the following three steps.

\begin{enumerate}
  \item
    Functions with few oscillations poorly approximate functions with many oscillations.

  \item
    Functions computed by networks with few layers must have few oscillations.

  \item
    Functions computed by networks with many layers can have many oscillations.
\end{enumerate}

\subsection{Approximation via oscillation counting}

\begin{wrapfigure}{R}{0.4\textwidth}
  \vspace{-8pt}

  \includegraphics[width=0.4\textwidth]{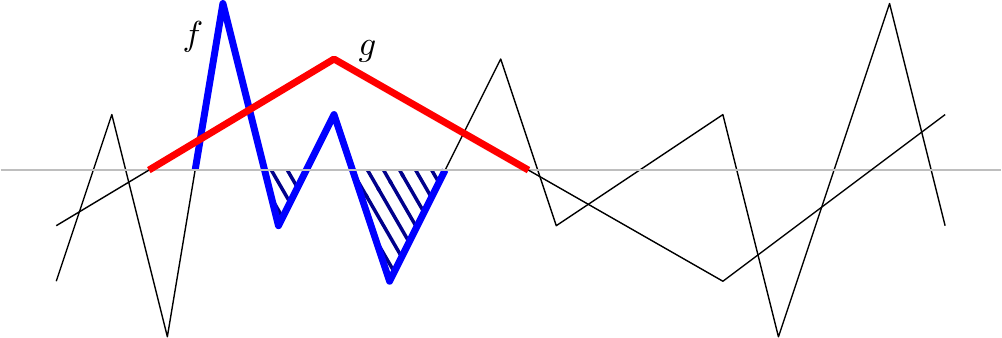}

  \vspace{0pt}

    \caption{$f$ crosses more than $g$.}
  \label{fig:apx:1}
  \vspace{-8pt}
\end{wrapfigure}
The idea behind this first step is depicted at right.
Given functions $f:\R\to\R$ and $g:\R\to\R$ (the multivariate case will come soon),
let $\cI_f$ and $\cI_g$ denote partitions of $\R$ into intervals so
that the classifiers $\tilde f(x) =\1[f(x)\geq 1/2]$ and $\tilde g$ are constant
within each interval.
To formally count oscillations, define
\emph{the crossing number $\Cr(f)$ of $f$} as $\Cr(f) = |\cI_f|$ (thus $\Cr(\srelu) = 2$).
If $\Cr(f)$ is much larger than $\Cr(g)$,
then most piecewise constant regions of $\tilde g$ will exhibit many oscillations of $f$,
and thus $g$ poorly approximates $f$.

\begin{lemma}
  \label[lemma]{fact:mono:crossing_lb:2}
  Let $f:\R\to\R$ and $g:\R\to\R$ be given,
  and take $\cI_f$ to denote the partition of $\R$ given by the pieces of $\tilde f$
  (meaning $|\cI_f| = \Cr(f)$).
  Then
  \[
    \frac 1 {\Cr(f)} \sum_{U \in \cI_f} \1[\forall x \in U \centerdot \tilde f(x) \neq \tilde g(x) ]
    \geq \frac 1 2\left(1 - 2 \left(\frac {\Cr(g)}{\Cr(f)}\right)\right).
  \]
\end{lemma}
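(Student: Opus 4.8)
The plan is to exploit the interleaving of the two partitions $\cI_f$ and $\cI_g$. Each piece $U \in \cI_f$ is an interval on which $\tilde f$ is constant, and similarly for pieces of $\cI_g$. Call a piece $U \in \cI_f$ \emph{bad} if it is split by (i.e., contains in its interior) an endpoint of some piece of $\cI_g$, and \emph{good} otherwise. On a good piece $U$, the classifier $\tilde g$ is constant (since $U$ lies entirely inside a single piece of $\cI_g$), while $\tilde f$ is also constant on $U$ by definition; moreover, adjacent good pieces of $\cI_f$ carry opposite values of $\tilde f$ (that is the point of the partition into maximal constant intervals). So among any run of consecutive good pieces, $\tilde f$ alternates but $\tilde g$ stays fixed, hence on at least half of the good pieces we have $\forall x\in U\centerdot \tilde f(x)\neq\tilde g(x)$ — actually one must be slightly careful, but within each maximal block of consecutive good pieces of $\cI_f$ lying inside one piece of $\cI_g$, at least $\lfloor (\text{block length})/2\rfloor$ of them disagree everywhere with $\tilde g$.

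First I would count the bad pieces: $\cI_g$ has $\Cr(g)$ pieces and therefore at most $\Cr(g)-1 \le \Cr(g)$ interior endpoints, and each such endpoint lies in at most one piece of $\cI_f$, so there are at most $\Cr(g)$ bad pieces. Hence the number of good pieces is at least $\Cr(f) - \Cr(g)$. Next I would lower-bound the number of good pieces that disagree everywhere with $\tilde g$: grouping the good pieces into the maximal consecutive blocks described above, each block of length $\ell$ contributes at least $\lceil (\ell-1)/2 \rceil \ge (\ell-1)/2$ everywhere-disagreeing pieces (the $-1$ and the rounding handle the worst case where the block's $\tilde f$-pattern starts on the ``wrong'' parity, and where a single leftover piece gives nothing). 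Summing over blocks, and noting there are at most $\Cr(g)$ blocks (one could even bound blocks by bad pieces plus pieces of $\cI_g$, but $\Cr(g)$ suffices after adjusting constants), the total number of everywhere-disagreeing good pieces is at least $\tfrac12(\Cr(f) - \Cr(g)) - \tfrac12\Cr(g) = \tfrac12\Cr(f) - \Cr(g)$. Dividing by $\Cr(f)$ gives exactly the claimed $\tfrac12\bigl(1 - 2(\Cr(g)/\Cr(f))\bigr)$.

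The main obstacle I expect is the bookkeeping of boundary effects and parities: a ``good'' piece of $\cI_f$ sits inside one piece of $\cI_g$, but that piece of $\cI_g$ may also contain bad pieces of $\cI_f$ at its ends, and within a run of good $\cI_f$-pieces the alternating pattern of $\tilde f$ might begin in phase or out of phase with the (fixed) value of $\tilde g$, so one only gets roughly half of them rather than all. Getting the constants to come out to precisely $\tfrac12(1 - 2\Cr(g)/\Cr(f))$ rather than something weaker requires being slightly generous in the counting — e.g. bounding the number of ``lost'' pieces (one per block, plus the bad pieces) crudely by $2\Cr(g)$ total. An alternative, cleaner route that avoids much of this case analysis: observe that $\1[\tilde f(x)\neq \tilde g(x)]$ changes value at most $\Cr(f)+\Cr(g)$ times over $\R$, define $h = $ the function whose pieces are the common refinement, and directly compare the number of refinement pieces on which the indicator is $1$; but the block-counting argument above is the most self-contained, so that is what I would write up.
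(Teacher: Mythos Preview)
Your proposal is correct and is essentially the paper's own proof in different clothing: your ``good'' pieces are exactly the paper's sets $X_J := \{U \in \cI_f : U \subseteq J\}$ for $J \in \cI_g$, your ``blocks'' are the nonempty $X_J$'s, and your count $\tfrac12(\Cr(f)-\Cr(g)) - \tfrac12\Cr(g)$ is the paper's $\tfrac{1}{2s_f}(s_f - s_g - s_g)$ obtained from $\sum_J |X_J| \geq s_f - s_g$ and $\sum_J (|X_J|-1)/2$. The only cosmetic difference is that the paper indexes by pieces of $\cI_g$ from the start rather than first sorting $\cI_f$ into good/bad.
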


The arguably strange form of the left hand side of the bound in
\Cref{fact:mono:crossing_lb:2} is to accommodate different notions of distance.
For the $L^1$ distance with the Lebesgue measure as in \Cref{fact:main},
it does not suffice for $f$ to cross 1/2: it must be \emph{regular},
meaning
it must cross by an appreciable distance, and the crossings must be evenly spaced.
(It is worth highlighting that the ReLU easily gives rise to a regular $f$.)
However, to merely show that $f$ and $g$ give very different classifiers $\tilde f$ and $\tilde g$
over an arbitrary measure (as in part of \Cref{fact:main:gen}), no additional regularity is needed.

\begin{proofof}{\Cref{fact:mono:crossing_lb:2}}
  Let $\cI_f$ and $\cI_g$ respectively denote the sets of intervals
  corresponding to $\tilde f$ and $\tilde g$,
  and set $s_f := \Cr(f) = |\cI_f|$ and $s_g := \Cr(g) = |\cI_g|$.

  For every $J\in \cI_g$, set $X_J := \{ U \in \cI_f : U \subseteq J \}$.
  Fixing any $J\in \cI_g$,
  since $\tilde g$ is constant on $J$ whereas $\tilde f$ alternates,
  the number of elements in $X_J$ where $\tilde g$ disagrees everywhere with $\tilde f$
  is $|X_J|/2$ when $|X_J|$ is even
  and at least $(|X_J|-1)/2$ when $|X_J|$ is odd,
  thus at least $(|X_J|-1)/2$ in general.
  As such,
  \begin{align}
    \frac 1 {s_f} \sum_{U \in \cI_f} \1[\forall x \in U \centerdot \tilde f(x) \neq \tilde g(x) ]
    &
    \geq
    \frac 1 {s_f} \sum_{J \in \cI_{g}} \sum_{U \in X_J}
    \1[\forall x \in U \centerdot \tilde f(x) \neq \tilde g(x) ]
   \geq
    \frac 1 {s_f} \sum_{J \in \cI_{g}} \frac {|X_J| - 1} {2}\label{eq:crossing_lb:1}.
  \end{align}
  To control this expression, note that every $X_J$ is disjoint, however $X:=\cup_{J\in\cI_j} X_j$
  can be smaller than $\cI_f$: in particular, it misses intervals $U\in \cI_f$
  whose interior intersects with the boundary of an interval in $\cI_g$.
  Since there are at most $s_g-1$ such boundaries,
  \[
    s_f = |\cI_f| \leq s_g - 1 + |X| \leq s_g + \sum_{J\in\cI_g} |X_J|,
  \]
  which rearranges to gives $\sum_{J\in\cI_g} |X_J| \geq s_f - s_g$.
  Combining this with \cref{eq:crossing_lb:1},
  \[
    \frac 1 {s_f} \sum_{U \in \cI_f} \1[\forall x \in U \centerdot \tilde f(x) \neq \tilde g(x) ]
    \geq \frac {1}{2s_f}\left( s_f - s_g - s_g \right)
    = \frac 1 2 \left(1 - \frac {2s_g}{s_f}\right).
  \]
\end{proofof}

\subsection{Few layers, few oscillations}
\label{sec:apx:lb}

As in the preceding section, oscillations of a function $f$ will be counted via the crossing number $\Cr(f)$.
Since $\Cr(\cdot)$ only handles univariate functions, the multivariate case is handled by first choosing an affine
map $h : \R\to\R^d$ (meaning $h(z) = az + b$) and considering $\Cr(f\circ h)$.

Before giving the central upper bounds and sketching their proofs, notice by analogy to polynomials how
compositions and additions vary in their impact upon oscillations.  By adding together two polynomials,
the resulting polynomial has at most twice as many terms and does not exceed the maximum degree of either polynomial.
On the other hand, composing polynomials, the result has the product of the degrees and can have more than the product
of the terms.  As both of these can impact the number of
roots or crossings (e.g., by the Bezout Theorem or Descartes' Rule of Signs),
composition wins the race to higher oscillations.

\begin{lemma}
  \label[lemma]{fact:sa_crossing}
  Let $h:\R\to\R^d$ be affine.
  \begin{enumerate}
    \item
      Suppose $f \in \cN_d((m_i,t_i,\alpha_i,\beta_i)_{i=1}^l)$
      with $\min_i\min\{\alpha_i,\beta_i\} \geq 1$.
                      Setting $\alpha := \max_i \alpha_i, \beta := \max_i \beta_i$, $t := \max_i t_i$,
      $m := \sum_i m_i$, then
            $\Cr(f\circ h) \leq 2(2tm\alpha/l)^l \beta^{l^2}$.
    \item
            Let $k$-dt $f:\R^d\to\R$ and $(t,k)$-bdt $g:\R^d\to\R$ be given.             Then $\Cr(f\circ h)\leq k$ and $\Cr(g\circ h) \leq 2tk$.
  \end{enumerate}
\end{lemma}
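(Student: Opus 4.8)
The plan is to prove part 1 by propagating, layer by layer, a representation of the univariate restriction $f\circ h$ as a piecewise-polynomial function of the line parameter $z$, and to prove part 2 by a short induction on the structure of a decision tree. The common reduction in both cases: if $u:\R\to\R$ is polynomial of degree $\le D$ on each of $N$ intervals, then $\tilde u = \1[u\ge 1/2]$ is constant on at most $N(D+1)$ subintervals (on a piece, $u-1/2$ has $\le D$ roots, or is constant), so $\Cr(u)\le N(D+1)$; it therefore suffices to bound $N$ and $D$ for $u=f\circ h$.

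For part 1, I would process layers $1,\dots,l$ in order, maintaining a partition of $\R$ into $Q_i$ intervals and a degree bound $D_i$ such that the restriction to $h(\R)$ of \emph{every} node output in layers $0,\dots,i$ is a polynomial of degree $\le D_i$ in $z$ on each of the $Q_i$ intervals. Layer $0$ gives affine maps $z\mapsto h(z)_j$, so $Q_0=1$, $D_0=1$. For the step, fix one current interval; on it every parent available to a layer-$i$ node is a degree-$\le D_{i-1}$ polynomial in $z$. Writing the $(t_i,\alpha_i,\beta_i)$-sa gate as a sum of (degree-$\le\beta_i$ polynomial) $\times$ (products of indicators $\1[q_j(\cdot)<0]$, $\1[q_j(\cdot)\ge 0]$) and substituting the parent polynomials, each $q_j$ becomes a polynomial of degree $\le \alpha_i D_{i-1}$ in $z$ with $\le \alpha_i D_{i-1}$ zeros; across all $\le m_i$ nodes and $\le t_i$ polynomials per gate this adds at most $m_i t_i\alpha_i D_{i-1}$ breakpoints inside the interval, and on each resulting subinterval all indicators are constant, so every layer-$i$ output is a polynomial of degree $\le \beta_i D_{i-1}$ there. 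Summing over the $Q_{i-1}$ intervals, $Q_i \le Q_{i-1}(1+m_i t_i\alpha_i D_{i-1}) \le Q_{i-1}\cdot 2 m_i t_i\alpha_i D_{i-1}$ and $D_i \le \beta_i D_{i-1}$, using $m_i,t_i,\alpha_i\ge 1$ and $D_{i-1}\ge D_0=1$. The point that makes this tractable is that although a node may read up to $m$ parents, \emph{all} earlier node outputs are polynomial on the \emph{same} refinement, so no parent ever exceeds degree $D_{i-1}$.

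Unrolling the recursions gives $D_l\le\beta^l$ and
\[
  Q_l \le \prod_{i=1}^l \big(2 m_i t_i \alpha_i \beta^{i-1}\big)
      \le 2^l\Big(\prod_{i=1}^l m_i\Big) t^l\alpha^l\,\beta^{l(l-1)/2}
      \le \Big(\frac{2mt\alpha}{l}\Big)^l \beta^{l(l-1)/2},
\]
where the last step is AM--GM, $\prod_i m_i \le (\tfrac1l\sum_i m_i)^l = (m/l)^l$; this is precisely where the crucial $1/l$ factor enters. Then $\Cr(f\circ h)\le Q_l(D_l+1)\le 2Q_l\beta^l\le 2(2mt\alpha/l)^l\beta^{l(l-1)/2+l}$, and since $l(l-1)/2+l = l(l+1)/2 \le l^2$ and $\beta\ge 1$, this is at most $2(2mt\alpha/l)^l\beta^{l^2}$, as claimed.

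For part 2, composing the test $\1[\ip{a}{x}-b\ge 0]$ of a $k$-dt with affine $h$ yields an affine condition in $z$, splitting $\R$ into at most two rays. By induction on $k$, a $k$-dt restricted to $h(\R)$ is piecewise constant with $\le k$ pieces: the base case $k=1$ is a constant; for $k>1$, each ray carries a left $l$-dt or right $r$-dt with $l+r\le k-1$, so the total is $\le l+r\le k-1\le k$ pieces (likewise if the test is degenerate). Hence $\Cr(f\circ h)\le k$. For a $(t,k)$-bdt $g=\sum_{i=1}^t c_i g_i$, the function $g\circ h$ is a linear combination of $t$ step functions with $\le k$ pieces each, hence constant on their common refinement, which has $\le t(k-1)+1\le 2tk$ pieces, so $\Cr(g\circ h)\le 2tk$. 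The main obstacle is entirely in part 1: setting up the global piecewise-polynomial bookkeeping so that each layer multiplies the degree by only $\beta_i$ and the piece count by only $\cO(m_i t_i\alpha_i D_{i-1})$, and then extracting the $1/l$ improvement via AM--GM (without which the depth separation in \Cref{fact:main} would fail); the extra $\beta^l$ factor from thresholding inside a polynomial piece is a small but necessary correction.
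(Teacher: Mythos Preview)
Your proposal is correct and follows essentially the same approach as the paper: a layer-by-layer induction showing $f\circ h$ is piecewise polynomial (the paper packages this via \Cref{fact:sa_to_poly} and \Cref{fact:partition_combination}, while you maintain a single global refinement directly), the same AM--GM step $\prod_i m_i \le (m/l)^l$ to extract the crucial $1/l$, and then \Cref{fact:poly_cr} to pass from piece/degree bounds to $\Cr$. For part~2 the paper routes through the $(k,1,0)$-sa characterization of decision trees and \Cref{fact:sa_to_poly}, whereas your direct structural induction on the tree is a cleaner equivalent.
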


\Cref{fact:sa_crossing} shows the key tradeoff: the number of layers is in the exponent,
while the number of nodes is in the base.

Rather than directly controlling $\Cr(f\circ h)$, the proofs will first show $f\circ h$ is $(t,\alpha)$-poly,
which immediately bounds $\Cr(f\circ h)$ as follows.
\begin{lemma}
  \label[lemma]{fact:poly_cr}
  If $f:\R\to\R$ is $(t,\alpha)$-poly,
  then $\Cr(f) \leq t(1+\alpha)$.
\end{lemma}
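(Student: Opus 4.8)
The plan is to reduce the crossing count to a per-piece count and then reassemble. Since $f$ is $(t,\alpha)$-poly, fix a partition of $\R$ into $s \leq t$ intervals $I_1,\dots,I_s$, ordered from left to right, on which $f$ agrees with polynomials $p_1,\dots,p_s$, each of degree at most $\alpha$. One checks that for each $j$ the sets $\{U\cap I_j : U\in\cI_f,\ U\cap I_j\neq\emptyset\}$ are exactly the maximal intervals on which $\tilde f$ restricted to $I_j$ is constant, and that every $U\in\cI_f$ meets at least one $I_j$; summing over $j$ then gives $\Cr(f)=|\cI_f|\leq\sum_{j=1}^s N_j$, where $N_j$ denotes the number of maximal constant pieces of $\tilde f$ on $I_j$. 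So it suffices to show $N_j\leq 1+\alpha$ for every $j$, since then $\Cr(f)\leq s(1+\alpha)\leq t(1+\alpha)$; morally, gluing the $I_j$ back together can only merge pieces across shared endpoints, never create them.

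For the per-piece bound I would argue as follows. Set $q_j := p_j - 1/2$. If $q_j\equiv 0$ then $\tilde f\equiv 1$ on $I_j$ and $N_j=1$. Otherwise $q_j$ is a nonzero polynomial of degree at most $\alpha$, hence has finitely many real roots in $I_j$, and $\tilde f = \1[q_j\geq 0]$ is locally constant away from those roots; consequently $N_j = 1 + \sum_r w(r)$, where the sum is over the distinct roots $r$ of $q_j$ in $I_j$ and $w(r)$ counts how many new pieces are started when one sweeps past $r$. A case analysis at a root $r$ of multiplicity $\mu$ shows: if $q_j$ stays positive on both sides of $r$ then $\tilde f$ is locally constantly $1$ and $w(r)=0$; if $q_j$ changes sign at $r$ then $\tilde f$ has a simple jump, so $w(r)=1$, while $\mu$ is odd and hence $\mu\geq 1$; and if $q_j$ stays negative on both sides of $r$ then $\{r\}$ is an isolated piece on which $\tilde f=1$, flanked by two pieces on which $\tilde f=0$, so $w(r)=2$, while $\mu$ is even and hence $\mu\geq 2$. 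In every case $w(r)$ is at most the multiplicity of $r$, so $\sum_r w(r)$ is at most the total root multiplicity of $q_j$, which is at most $\deg q_j \leq \alpha$; thus $N_j\leq 1+\alpha$ as desired.

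The only genuine subtlety --- and the reason the clean constant is $t(1+\alpha)$ rather than, say, $t(1+2\alpha)$ --- is the even-multiplicity ``touch from below'' case: a degree-$\alpha$ polynomial can attain the threshold $1/2$ at a single point yet force $\tilde f$ to pick up two extra pieces there, as with $f(x)=\tfrac12 - x^2$, where $\tilde f=\1[x=0]$ already has three pieces. Thus the naive claim ``$f$ takes value $1/2$ at most $\alpha$ times, hence $\tilde f$ has at most $\alpha+1$ pieces on $I_j$'' is false; what saves the bound is precisely that such an isolated point is an even-order (so multiplicity $\geq 2$) root, which pays for its two new pieces out of the degree budget. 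Everything else --- the restriction/gluing bookkeeping and the local case analysis at a root --- is routine, and the resulting bound is tight, for instance for $\tfrac12 - \prod_{i=1}^{\lfloor\alpha/2\rfloor}(x-i)^2$ on $I_1=\R$.
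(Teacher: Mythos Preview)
Your proof is correct and follows the same approach as the paper: bound the number of constant pieces of $\tilde f$ on each of the $\leq t$ polynomial intervals by $1+\alpha$, then sum. The paper's own proof is a two-line sketch (``at most $\alpha$ roots, hence $\leq 1+\alpha$ pieces per interval''), and your multiplicity analysis of the touch-from-below case is exactly the content that this sketch suppresses; as your example $f(x)=\tfrac12-x^2$ shows, one distinct root can produce three pieces, so the per-piece bound really does require charging pieces against root multiplicities rather than against distinct roots.
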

\begin{proof}
  The polynomial in each piece has at most $\alpha$ roots,
  which thus divides each piece into $\leq 1+\alpha$ further pieces
  within which $\tilde f$ is constant.
\end{proof}

A second technical lemma is needed to reason about combinations
of partitions defined by $(t,\alpha,\beta)$-sa and $(t,\alpha)$-poly functions.

\begin{lemma}
  \label[lemma]{fact:partition_combination}
  Let $k$ partitions $(A_i)_{i=1}^k$ of $\R$ each into at most $t$ intervals be given,
  and set $A := \cup_i A_i$.
  Then there exists a partition $B$ of $\R$ of size at most $kt$
  so that every interval expressible as a union of intersections of elements of $A$
  is a union of elements of $B$.
\end{lemma}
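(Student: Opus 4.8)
The plan is to take $B$ to be the common refinement of the $k$ partitions $A_1,\dots,A_k$, constructed via the natural equivalence relation rather than by naively merging endpoints (which would force annoying left/right conventions at shared cut points). Concretely: declare $x\sim y$ when $x$ and $y$ lie in the same block of $A_i$ for every $i\in[k]$, and let $B$ be the set of $\sim$-equivalence classes. First I would check that $B$ is a genuine partition of $\R$ into intervals. Reflexivity, symmetry, and transitivity are immediate, and each class is an interval because the blocks of each $A_i$ are intervals: if $x<z<y$ with $x\sim y$, then for each $i$ the points $x,y$ lie in a common $A_i$-block, which is convex, so $z$ lies in it too, whence $z\sim x$.

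The heart of the argument is the size bound $|B|\le kt$, which I would establish via cut points. For each $i$ let $C_i$ be the set of points at which two consecutive blocks of $A_i$ meet; since $A_i$ has at most $t$ blocks, $|C_i|\le t-1$. Put $C:=\bigcup_{i=1}^k C_i$, so $|C|\le k(t-1)$. Order the classes of $B$ from left to right as $B_1<B_2<\dots<B_r$ (they are intervals partitioning $\R$, hence linearly ordered), and for $j=1,\dots,r-1$ let $e_j:=\sup B_j=\inf B_{j+1}$ be the meeting point. I claim $e_j\in C$: if $e_j\notin C_i$ for every $i$, then $e_j$ is interior to some $A_i$-block for each $i$, so a sufficiently small neighborhood $(e_j-\varepsilon,e_j+\varepsilon)$ lies inside a single block of every $A_i$ and is therefore contained in a single $\sim$-class; but this neighborhood meets both $B_j$ and $B_{j+1}$, a contradiction. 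Since the $e_j$ are distinct, $r-1\le|C|\le k(t-1)$, hence $|B|=r\le k(t-1)+1\le kt$.

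Finally I would verify the refinement property. By construction $B$ refines each $A_i$, i.e., every $\sim$-class is contained in a single $A_i$-block, so every block of every $A_i$ is a union of elements of $B$; thus every element of $A=\bigcup_i A_i$ is a union of elements of $B$. Because $B$ is a partition, the collection of sets that are unions of elements of $B$ is closed under (finite) intersection and arbitrary union: $\bigl(\bigcup_{B_j\subseteq S}B_j\bigr)\cap\bigl(\bigcup_{B_l\subseteq S'}B_l\bigr)=\bigcup_{B_j\subseteq S\cap S'}B_j$. Consequently any set obtained from elements of $A$ by intersections followed by unions is again a union of elements of $B$; in particular this holds for any such set that happens to be an interval, which is exactly the claim.

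I expect the only real obstacle to be the counting step. The crude approach --- writing the points of $C$ as $d_1<\dots<d_N$ and bounding $|B|$ by the number of ``atoms'' $(-\infty,d_1),\{d_1\},(d_1,d_2),\dots$, namely $\le 2k(t-1)+1$ --- is too weak (it exceeds $kt$ already for small $k,t$). The fix above avoids this by bounding the number of \emph{meeting points} of consecutive $B$-classes directly by $|C|$, using that every point of $\R\setminus C$ is interior to each $A_i$-block and therefore cannot separate two $B$-classes; this is what brings the bound down to $k(t-1)+1\le kt$.
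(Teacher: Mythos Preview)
Your overall strategy --- take $B$ to be the common refinement via the equivalence $x\sim y$ iff $x$ and $y$ share an $A_i$-block for every $i$, then verify refinement and closure under intersection/union --- is clean and correct, and is arguably tidier than the paper's iterative construction (which processes boundary points one at a time and splits into two or three pieces depending on whether the open/closed conventions at that point agree). The two constructions yield the same $B$.

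The gap is precisely in the counting step you flagged as the obstacle, and your proposed fix does not survive singletons. The assertion ``the $e_j$ are distinct'' is false. Take $A_1=\{(-\infty,0],(0,\infty)\}$ and $A_2=\{(-\infty,0),[0,\infty)\}$; the common refinement is $B=\{(-\infty,0),\{0\},(0,\infty)\}$, and $e_1=\sup(-\infty,0)=0=\sup\{0\}=e_2$. Here $|C|=1$ while $|B|-1=2$, so the inequality $r-1\le|C|$ fails. This is exactly the open/closed mismatch the paper warns about, and it is why the paper's proof is ``somewhat painful.''

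The repair is to count \emph{advances} of the individual partitions rather than cut points of their union. For each transition $B_j\to B_{j+1}$ there is at least one $i$ whose $A_i$-block strictly advances (otherwise $B_j$ and $B_{j+1}$ would lie in the same $A_i$-block for every $i$ and hence coincide). Conversely, since $B$ refines $A_i$, each $A_i$-block is a run of consecutive $B$-blocks, so the $A_i$-index advances exactly $|A_i|-1$ times over the whole walk. Summing,
\[
|B|-1 \;\le\; \sum_{i=1}^k\bigl(|A_i|-1\bigr)\;\le\; k(t-1)\;\le\; kt-1,
\]
which gives $|B|\le kt$. With this correction your argument goes through and in fact yields the slightly sharper $|B|\le k(t-1)+1$.
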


\begin{wrapfigure}{R}{0.3\textwidth}
  \vspace{-8pt}

  \includegraphics[width=0.3\textwidth]{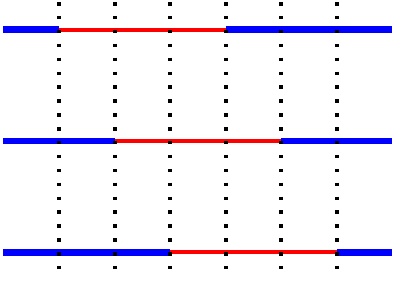}

  \vspace{0pt}

  \caption{Three partitions.}
  \label{fig:partition:counting}
  \vspace{-8pt}
\end{wrapfigure}
The proof is somewhat painful owing to the fact that there is no convention on the structure
of the intervals in the partitions, namely which ends are closed and which are open,
and is thus deferred to \Cref{sec:proofs}.
The principle of the proof is elementary, and is depicted at right:
given a collection of partitions, an intersection of constituent intervals must share endpoints
with intervals in in the intersection, thus the total number of intervals bounds the total number
of possible intersections.
Arguably, this failure to increase complexity in the face of arbitrary intersections
is why semi-algebraic gates do not care about the number of terms in their definition.

Recall that $(t,\alpha,\beta)$-sa means there is a set of $t$ polynomials of degree at most $\alpha$
which form the regions defining the function by intersecting simpler regions
$x\mapsto \1[q(x) \geq 0]$ and $x\mapsto \1[q(x) < 0]$.
As such, in order to analyze semi-algebraic gates composed with piecewise polynomial gates,
consider first the behavior of these predicate polynomials.

\begin{lemma}
  \label[lemma]{fact:poly_circ_pp}
  Suppose $f:\R^k\to\R$ is polynomial with degree $\leq\alpha$
  and $(g_i)_{i=1}^k$ are each $(t,\gamma)$-poly.
  Then $h(x) := f(g_1(x),\ldots,g_k(x))$ is $(tk,\alpha\gamma)$-poly,
  and the partition defining $h$ is a refinement of the partitions for each $g_i$
  (in particular, each $g_i$ is a fixed polynomial (of degree $\leq\gamma$)
  within the $\leq tk$ pieces defining $h$).
\end{lemma}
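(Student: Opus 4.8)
The plan is to combine the two partitions of $\R$ that are relevant here: the partition $A_i$ associated with the piecewise polynomial structure of each $g_i$, and then to use \Cref{fact:partition_combination} to merge them into a single common refinement. First I would invoke the definition of $(t,\gamma)$-poly: for each $i\in[k]$ there is a partition $A_i$ of $\R$ into at most $t$ intervals so that $g_i$ agrees with a single polynomial of degree $\le\gamma$ on each interval of $A_i$. Applying \Cref{fact:partition_combination} to the collection $(A_i)_{i=1}^k$ produces a partition $B$ of $\R$ into at most $tk$ intervals which refines (in the union-of-intersections sense) each $A_i$; in particular, on each interval $I\in B$, every $g_i$ restricted to $I$ is a single fixed polynomial $g_{i,I}$ of degree $\le\gamma$. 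This immediately gives the ``refinement'' claim in the statement and the parenthetical remark.

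Next I would check that $h$ is a single polynomial of the right degree on each piece of $B$. Fix $I\in B$. On $I$ we have $h(x) = f(g_{1,I}(x),\ldots,g_{k,I}(x))$, which is the composition of the fixed polynomial $f$ (of degree $\le\alpha$) with a $k$-tuple of fixed polynomials each of degree $\le\gamma$. A routine degree computation shows this composition is a polynomial of degree $\le\alpha\gamma$: each monomial of $f$ of total degree $\le\alpha$, when the variables are substituted by degree-$\le\gamma$ polynomials, becomes a polynomial of degree $\le\alpha\gamma$, and sums of such are still degree $\le\alpha\gamma$. Hence $h$ agrees with a polynomial of degree $\le\alpha\gamma$ on each of the $\le tk$ intervals of $B$, so $h$ is $(tk,\alpha\gamma)$-poly, as claimed.

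The only genuinely delicate point is that \Cref{fact:partition_combination} is stated about intervals ``expressible as a union of intersections of elements of $A$'' being unions of elements of $B$, rather than the cleaner statement that $B$ is a common refinement of all the $A_i$ simultaneously. I would therefore be careful to note that each interval of $A_i$ is trivially such a union of intersections (it is itself an element of $A = \cup_i A_i$), hence is a union of elements of $B$; this is exactly what is needed to conclude that every $g_i$ is constant-as-a-polynomial on each piece of $B$. The main obstacle, such as it is, is purely bookkeeping around the open/closed endpoints of the intervals, but since \Cref{fact:partition_combination} already absorbs that difficulty, the remainder is the elementary degree bound above.
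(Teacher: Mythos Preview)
Your proposal is correct and follows essentially the same approach as the paper: invoke \Cref{fact:partition_combination} on the partitions $(A_i)_{i=1}^k$ to obtain a common refinement $B$ with $\leq tk$ intervals, then observe that on each piece of $B$ the composition $f(g_1,\ldots,g_k)$ is a fixed polynomial of degree $\leq \alpha\gamma$. Your extra care in noting that each interval of $A_i$ is trivially a ``union of intersections of elements of $A$'' (hence a union of elements of $B$, so $B$ refines each $A_i$) is a detail the paper glosses over but is exactly the right observation.
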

\begin{proof}
  By \Cref{fact:partition_combination},
  there exists a partition of $\R$ into $\leq tk$ intervals
  which refines the partitions defining each $g_i$.
  Since $f$ is a polynomial with degree $\leq \alpha$,
  then within each of these intervals, its composition with $(g_1,\ldots,g_k)$
  gives a polynomial of degree $\leq \alpha\gamma$.
  \end{proof}

This gives the following complexity bound for composing $(s,\alpha,\beta)$-sa and $(t,\gamma)$-poly gates.

\begin{lemma}
  \label[lemma]{fact:sa_to_poly}
  Suppose $f : \R^k\to\R$ is $(s,\alpha,\beta)$-sa
  and $(g_1,\ldots,g_k)$ are $(t,\gamma)$-poly.
  Then $h(x) := f(g_1(x),\ldots,g_k(x))$ is $(stk(1 +  \alpha\gamma), \beta\gamma)$-poly.
\end{lemma}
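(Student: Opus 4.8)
The plan is to exhibit a single partition of $\R$ into intervals on which $h$ coincides with an honest polynomial of degree $\leq\beta\gamma$, and then to bound the number of intervals. Write $f$ in semi-algebraic form, $f(v)=\sum_{j=1}^m p_j(v)\big(\prod_{i\in L_j}\1[q_i(v)<0]\big)\big(\prod_{i\in U_j}\1[q_i(v)\geq 0]\big)$ with $\deg q_i\leq\alpha$, $\deg p_j\leq\beta$, and keep in mind that the number of terms $m$ must play no role.

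\textbf{Step 1: freeze the $g_i$.} Apply \Cref{fact:partition_combination} to the $k$ partitions witnessing that each $g_i$ is $(t,\gamma)$-poly, obtaining a partition $B_0$ of $\R$ into $\leq tk$ intervals on which all the $g_i$ simultaneously agree with fixed polynomials of degree $\leq\gamma$. Then on any $I\in B_0$, each composed predicate $q_\ell(g_1(x),\ldots,g_k(x))$ agrees with a polynomial of degree $\leq\alpha\gamma$ and each $p_j(g_1(x),\ldots,g_k(x))$ with one of degree $\leq\beta\gamma$ --- the composition-of-degrees bound already used in \Cref{fact:poly_circ_pp}.

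\textbf{Step 2: freeze the sign patterns.} Fix $I\in B_0$. For each $\ell$, either $q_\ell\circ g$ vanishes identically on $I$ (its two indicators already constant), or $I$ splits into $\leq 1+\alpha\gamma$ subintervals on which both $\1[q_\ell(g(x))<0]$ and $\1[q_\ell(g(x))\geq 0]$ are constant: a degree-$D$ polynomial flips sign only at its odd-multiplicity roots, and its $\{0,1\}$-indicator can additionally take a lone singleton value only at an even-multiplicity root around which the polynomial is $\leq 0$; charging $1$ per sign flip and $2$ per such isolated zero, and using that these consume multiplicity $\geq 1$ and $\geq 2$ respectively, the count is $\leq 1+D=1+\alpha\gamma$. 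Now combine these $s$ partitions of $I$ via \Cref{fact:partition_combination}: $I$ is cut into $\leq s(1+\alpha\gamma)$ intervals on each of which every predicate indicator $\1[q_\ell(g(x))<0],\1[q_\ell(g(x))\geq 0]$ is constant. On each such interval every indicator product in the expansion of $f$ collapses to a constant in $\{0,1\}$, so $h$ agrees with $\sum_{j\in S}p_j(g_1(x),\ldots,g_k(x))$ for a fixed $S\subseteq[m]$, a polynomial of degree $\leq\beta\gamma$ by Step 1 (with $m$ irrelevant). Multiplying the two counts --- $\leq tk$ intervals from Step 1, each refined into $\leq s(1+\alpha\gamma)$ --- yields a partition of $\R$ into $\leq stk(1+\alpha\gamma)$ intervals on which $h$ is a polynomial of degree $\leq\beta\gamma$, i.e.\ $h$ is $(stk(1+\alpha\gamma),\beta\gamma)$-poly.

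I expect the only real friction to be the endpoint bookkeeping in Step 2 --- the same nuisance flagged for \Cref{fact:partition_combination} --- where one must glue each odd-multiplicity root to the neighboring open interval sharing its indicator value, keep each even-multiplicity ``touch-from-below'' root as a singleton interval, and check that these singletons are already paid for inside the $1+\alpha\gamma$ budget (they are, since each consumes two units of multiplicity). Everything else is a routine tally of degrees and interval counts, all of which compounds multiplicatively precisely because \Cref{fact:partition_combination} charges nothing extra for intersections.
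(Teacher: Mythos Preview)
Your proof is correct and follows essentially the same approach as the paper's: build a common refinement on which all the $g_i$ are fixed polynomials, then further refine so that all $s$ predicate indicators are constant, and observe that on each resulting piece $h$ reduces to a polynomial of degree $\leq\beta\gamma$. The only organizational difference is that the paper first handles each $q_i(g_1,\ldots,g_k)$ globally via \Cref{fact:poly_circ_pp} (obtaining a $(tk,\alpha\gamma)$-poly function) and then invokes \Cref{fact:poly_cr} to bound each predicate's partition by $tk(1+\alpha\gamma)$ pieces before combining the $s$ predicates with \Cref{fact:partition_combination}, whereas you first freeze the $g_i$ into $\leq tk$ pieces and then, within each piece, bound each predicate's partition by $1+\alpha\gamma$ and combine; the arithmetic is identical. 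Your multiplicity accounting for the isolated-zero case is in fact more explicit than the paper's terse proof of \Cref{fact:poly_cr}, which simply asserts ``at most $\alpha$ roots, hence $\leq 1+\alpha$ pieces'' without spelling out why even-multiplicity touch-from-below roots do not overrun the budget.
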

\begin{proof}
  By definition, $f$ is polynomial in regions defined by intersections of the predicates
  $U_i(x) = \1[q_i(x) \geq 0]$ and $L_i(x) = \1[q_i(x) < 0]$.
  By \Cref{fact:poly_circ_pp}, $q_i(g_1,\ldots,g_k)$ is $(tk,\alpha\gamma)$-poly,
  thus $U_i$ and $L_i$ together define a partition of $\R$ which has $\Cr(x\mapsto q_i(g_1(x),\ldots,g_k(x)))$
  pieces, which by \Cref{fact:poly_cr} has cardinality at most $tk(1 + \alpha\gamma)$
  and refines the partitions for each $g_i$.
  By \Cref{fact:partition_combination}, these partitions across all predicate polynomials $(q_i)_{i=1}^s$
  can be refined into a single partition of size
  $\leq stk(1 + \alpha\gamma)$,
  and which thus also refines the partitions defined by $(g_1,\ldots,g_k)$.
  Thanks to these refinements, $h$ over any element $U$ of this final partition
  is a fixed polynomial $p_U(g_1,\ldots,g_k)$ of degree $\leq\beta\gamma$,
  meaning $h$ is $(stk(1 + \alpha\gamma), \beta\gamma)$-poly.
\end{proof}

The proof of \Cref{fact:sa_crossing} now follows by \Cref{fact:sa_to_poly}.
In particular, for semi-algebraic networks, the proof is an induction over layers,
establishing node $j$ is $(t_j,\alpha_j)$-poly (for appropriate $(t_j,\alpha_j)$).

\subsection{Many layers, many oscillations}
\label{sec:triangle}

The idea behind this construction is as follows.  Consider any continuous function $f:[0,1]\to[0,1]$ which is a generalization of
a triangle wave with a single peak: $f(0) = f(1) = 0$,
and there is some $a\in(0,1)$ with $f(a) = 1$, and additionally $f$ strictly increases along $[0,a]$ and strictly decreases along
$[a,1]$.

Now consider the effect of the composition $f\circ f = f^2$. Along $[0,a]$, this is a stretched copy of $f$, since $f(f(a)) = f(1) = 0 = f(0) = f(f(0))$
and moreover $f$ is a bijection between $[0,a]$ and $[0,1]$ (when restricted to $[0,a]$).  The same reasoning applies to $f^2$ along $[a,1]$,
meaning $f^2$ is a function with two peaks.  Iterating this argument implies $f^{k}$ is a function with $2^{k-1}$ peaks; the following
definition and lemmas formalize this reasoning.

\begin{definition}
  $f$ is \emph{$(t,[a,b])$-triangle} when it is continuous along $[a,b]$, and $[a,b]$ may
  be divided into $2t$ intervals $[a_i, a_{i+1}]$ with $a_1 = a$ and $a_{2t+1}=b$,
  $f(a_i) = f(a_{i+2})$ whenever $1 \leq i \leq 2t-1$,
  $f(a_1) = 0$,
  $f(a_2) = 1$,
  $f$ is strictly increasing along odd-numbered intervals (those starting from $a_i$ with $i$ odd),
  and strictly decreasing along even-numbered intervals.
\end{definition}

\begin{lemma}
  \label[lemma]{fact:mono:gen:ub:1}
  If $f$ is $(s,[0,1])$-triangle and $g$ is $(t,[0,1])$-triangle,
  then $f\circ g$ is $(2st, [0,1])$-triangle.
\end{lemma}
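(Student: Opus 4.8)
The plan is to verify the four defining properties of a $(2st,[0,1])$-triangle function directly for $f\circ g$, by pulling the breakpoints of $f$ back through $g$ on each monotone piece of $g$.

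First I would record a structural consequence of the definition: if $\phi$ is $(r,[0,1])$-triangle with breakpoints $c_1 = 0 < c_2 < \cdots < c_{2r+1} = 1$, then on each subinterval $[c_i,c_{i+1}]$ the function $\phi$ is a strictly monotone continuous bijection onto $[0,1]$, increasing from $0$ to $1$ when $i$ is odd and decreasing from $1$ to $0$ when $i$ is even. Indeed the recursion $\phi(c_i) = \phi(c_{i+2})$ with $\phi(c_1) = 0$ and $\phi(c_2) = 1$ forces $\phi(c_i) = 0$ for odd $i$ and $\phi(c_i) = 1$ for even $i$; strict monotonicity and the intermediate value theorem then give the claimed bijections. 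In particular $\phi(0) = \phi(1) = 0$, and since $2r$ is even the first piece of $\phi$ is increasing and the last is decreasing; moreover $\phi$'s pattern of directions $(\text{inc},\text{dec},\ldots,\text{inc},\text{dec})$ and its breakpoint-value pattern $(0,1,0,\ldots,1,0)$ are both invariant under simultaneously reversing the order of the pieces and flipping each direction. I apply this to both $f$ (breakpoints $c_1,\ldots,c_{2s+1}$) and $g$ (breakpoints $a_1,\ldots,a_{2t+1}$).

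Next I would construct the partition witnessing that $f\circ g$ is $(2st,[0,1])$-triangle. On each of the $2t$ subintervals $[a_i,a_{i+1}]$, the restriction of $g$ is a monotone homeomorphism onto $[0,1]$, so it has a continuous monotone inverse; pulling the $2s+1$ breakpoints of $f$ back through this inverse cuts $[a_i,a_{i+1}]$ into exactly $2s$ pieces. Since $g(a_i)\in\{0,1\}=\{c_1,c_{2s+1}\}$, the relevant pulled-back breakpoint at each interior seam $a_2,\ldots,a_{2t}$ is $a_i$ itself, so these $2t$ refinements agree at the seams and glue into one partition of $[0,1]$ into $2t\cdot 2s = 4st = 2\cdot(2st)$ intervals, with breakpoints $b_1 = 0 < b_2 < \cdots < b_{4st+1} = 1$. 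On each such piece $f\circ g$ is a composition of two strictly monotone continuous maps, hence strictly monotone and continuous; and $f\circ g$ is continuous on all of $[0,1]$. To finish, I analyze a block $[a_i,a_{i+1}]$: when $i$ is odd, $g$ sweeps $[0,1]$ increasingly, so over its $2s$ pieces $f\circ g$ reproduces $f$'s breakpoint values $0,1,0,\ldots$ and direction pattern $\text{inc},\text{dec},\ldots,\text{dec}$; when $i$ is even, $g$ sweeps $[0,1]$ in reverse, so $f\circ g$ traverses $f$'s pieces in reverse order with each direction flipped, which by the palindrome observation again gives breakpoint values $0,1,0,\ldots$ and pattern $\text{inc},\text{dec},\ldots,\text{dec}$, and which begins at $f(g(a_i)) = 0$, matching the value $f(g(a_i)) = 0$ at which the previous block ended. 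Concatenating the $2t$ blocks therefore yields $(f\circ g)(b_1) = f(g(0)) = f(0) = 0$, $(f\circ g)(b_2) = f(c_2) = 1$, breakpoint values alternating $0,1,0,\ldots$ (so $(f\circ g)(b_j) = (f\circ g)(b_{j+2})$ for all valid $j$), strictly increasing on odd-numbered pieces, and strictly decreasing on even-numbered pieces --- exactly the statement that $f\circ g$ is $(2st,[0,1])$-triangle.

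The main obstacle is the case analysis in the block at an even index $i$, namely handling the reversal of $f$'s piece structure when $g$ is decreasing and checking that both alternation patterns survive crossing each of the $2t-1$ internal seams; this is conceptually routine --- it only uses that $f$ begins and ends its traversal of $[0,1]$ at $0$ and has an even number of pieces --- but the indexing is fiddly. A clean way to sidestep the index gymnastics is to note that $z\mapsto f(1-z)$ is again $(s,[0,1])$-triangle, so each block of $f\circ g$, after an order-preserving reparametrization of $[a_i,a_{i+1}]$ onto $[0,1]$, is either $f$ or $f(1-\cdot)$; the lemma then reduces to the evident fact that a concatenation of $2t$ such functions with matching endpoint values is $(2st,[0,1])$-triangle.
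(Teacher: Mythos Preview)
Your proposal is correct and follows essentially the same route as the paper: pull back the breakpoints of $f$ through the monotone inverse of $g$ on each of $g$'s $2t$ pieces, and verify the triangle properties on the resulting $4st$ subintervals. The paper is terser---it handles the odd-$j$ (increasing) block explicitly and dismisses the even-$j$ (decreasing) block with ``an analogous proof holds''---whereas you spell out the decreasing case via the palindrome observation (equivalently, that $z\mapsto f(1-z)$ is again $(s,[0,1])$-triangle) and explicitly check that the blocks match at the seams; this extra care is welcome but does not change the underlying argument.
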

\begin{proof}
  \def\fab{f_{[a,b]}}
  Since $g([0,1]) = [0,1]$ and $f$ and $g$ are continuous along $[0,1]$,
  then $f\circ g$ is continuous along $[0,1]$.
  In the remaining analysis,
  let $(a_1,\ldots,a_{2s+1})$ and $(c_1,\ldots,c_{2t+1})$
  respectively denote the interval boundaries for $f$ and $g$.

  Now consider any interval $[c_j, c_{j+1}]$ where $j$ is odd,
  meaning the restriction $g_j : [c_j, c_{j+1}]\to [0,1]$ of $g$ to $[c_j, c_{j+1}]$ is strictly increasing.
  It will be shown that $f\circ g_j$ is $(s,[c_j,c_{j+1}])$-triangle,
  and an analogous proof holds for the strictly decreasing restriction
  $g_{j+1} : [c_{j+1}, c_{j+2}] \to [0,1]$,
  whereby it follows that $f\circ g$ is $(2st,[0,1])$ by considering all choices of $j$.

  To this end, note for any $i \in \{1,\ldots, 2s+1\}$ that $g_{j}^{-1}(a_i)$ exists and is unique,
  thus set $a'_i := g_j^{-1}(a_i)$.
  By this choice,
  for odd $i$ it holds that $f(g_j(a'_i)) = f(g_j(g_j^{-1}(a_i))) = f(a_i) = f(a_1) = 0$
  and $f\circ g_j$ is strictly increasing
  along $[a'_i, a'_{i+1}]$ (since $g_j$ is strictly increasing everywhere
  and $f$ is strictly increasing along $[g_j(a'_i), g_j(a'_{i+1})] = [a_i, a_{i+1}]$),
  and similarly even $i$ has $f(g_j(a'_i)) = f(a_2) = 1$
  and $f\circ g_j$ is strictly decreasing along $[a'_i, a'_{i+1}]$.
\end{proof}

\begin{corollary}
  \label[corollary]{fact:mono:gen:ub:2}
  If $f\in \cN_1(m,l,t,\alpha,\beta)$ is $(t,[0,1])$-triangle with $p$ distinct parameters,
  then $f^k \in\cN_1(m,kl,t,\alpha,\beta)$ is $(2^{k-1}t^{k},[0,1])$-triangle with $p$ distinct parameters
  and $\Cr(f^k) = (2t)^k+1$.
\end{corollary}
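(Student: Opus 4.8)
The plan is to derive all four assertions --- that $f^k$ is $(2^{k-1}t^k,[0,1])$-triangle, that $f^k\in\cN_1(m,kl,t,\alpha,\beta)$, that $f^k$ has $\leq p$ distinct parameters, and that $\Cr(f^k)=(2t)^k+1$ --- from a single induction on $k$ for the triangle property, with the other three following by elementary bookkeeping once the triangle structure is known. The induction is powered entirely by \Cref{fact:mono:gen:ub:1}, which turns one composition into a doubling (times $t$) of the number of teeth.

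\textbf{Triangle structure (induction on $k$).} The base case $k=1$ is exactly the hypothesis that $f=f^1$ is $(t,[0,1])$-triangle, and $t=2^{1-1}t^1$. For the inductive step, assume $f^{k-1}$ is $(2^{k-2}t^{k-1},[0,1])$-triangle. Writing $f^k=f^{k-1}\circ f$ and invoking \Cref{fact:mono:gen:ub:1} with the outer function $f^{k-1}$ (a $(2^{k-2}t^{k-1},[0,1])$-triangle) and the inner function $f$ (a $(t,[0,1])$-triangle) shows $f^k$ is $\bigl(2\cdot 2^{k-2}t^{k-1}\cdot t,\,[0,1]\bigr)$-triangle, i.e.\ $(2^{k-1}t^k,[0,1])$-triangle. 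The hypothesis of \Cref{fact:mono:gen:ub:1} is satisfied because any $(s,[0,1])$-triangle maps $[0,1]$ onto $[0,1]$ (by continuity and strict monotonicity on the pieces, together with the attained values $0$ and $1$), so the composition on $[0,1]$ is well-defined.

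\textbf{Network and parameters.} The function $f^k$ is computed by chaining $k$ copies of the network for $f$: the unique output node of copy $i$, which sits in the top layer of that copy, plays the role of the ``layer $0$'' input of copy $i+1$. Since layer $0$ is not counted and each copy contributes $\leq l$ layers with $\leq m$ nodes each, all $(t,\alpha,\beta)$-sa, the chained network has $\leq kl$ layers, $\leq m$ nodes per layer, and only $(t,\alpha,\beta)$-sa gates; hence $f^k\in\cN_1(m,kl,t,\alpha,\beta)$. Because the definition of a neural net permits one parameter to feed several gates, every copy may reuse the same $w\in\R^p$, so $f^k$ has only $p$ distinct parameters.

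\textbf{Crossing number.} Put $T:=2^{k-1}t^k$ and let $a_1=0<a_2<\cdots<a_{2T+1}=1$ be the interval boundaries witnessing that $f^k$ is $(T,[0,1])$-triangle. The boundary values alternate ($f^k(a_j)=0$ for $j$ odd and $=1$ for $j$ even, since $f^k(a_1)=0$, $f^k(a_2)=1$, $f^k(a_i)=f^k(a_{i+2})$), and $f^k$ is strictly monotone on each $[a_i,a_{i+1}]$, so on each of the $2T$ intervals $f^k$ attains the value $1/2$ exactly once; thus these $2T$ crossings cut $[0,1]$ into $2T+1$ subintervals on which $\widetilde{f^k}$ is constant, giving $\Cr(f^k)=2T+1=2^kt^k+1=(2t)^k+1$ (this count over $[0,1]$ is all that \Cref{fact:main} needs; for the explicit $f$ of \Cref{sec:triangle}, whose classifier is also constant on the two rays outside $[0,1]$, it holds over all of $\R$).

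\textbf{Main difficulty.} Given \Cref{fact:mono:gen:ub:1}, no step is deep; the only points demanding care are (i) the exact layer accounting under composition --- using the layer-$0$ convention so that stacking $k$ depth-$l$ networks yields depth precisely $kl$ rather than $kl+k-1$ --- and (ii) converting the qualitative ``triangle'' statement into the exact value of $\Cr$, which relies on noticing that each monotone piece runs strictly from height $0$ to height $1$ (or vice versa) and so crosses $1/2$ exactly once.
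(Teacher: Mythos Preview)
Your proof is correct and follows the same approach as the paper, which simply states ``It suffices to perform $k-1$ applications of \Cref{fact:mono:gen:ub:1}.'' You have in fact been more careful than the paper: you explicitly verify the network membership, the parameter count, and the exact crossing-number value (including the subtlety about behavior outside $[0,1]$), all of which the paper's one-line proof leaves to the reader.
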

\begin{proof}
  It suffices to perform $k-1$ applications of \Cref{fact:mono:gen:ub:1}.
\end{proof}

Next, note the following examples of triangle functions.

\begin{lemma}
  \label[lemma]{fact:triangles}
  The following functions are $(1,[0,1])$-triangle.
  \begin{enumerate}
    \item
      $f(z) := \srelu(2\srelu(z) - 4\srelu(z-1/2)) \in \cN_1(2,1, 1,1,1)$.

    \item
      $g(z) := \min\{ \srelu(2z), \srelu(2-2z) \} \in \cN_1(2,1, 2,1,1)$.
    \item
      $h(z) := 4z(1-z) \in \cN_1(1,1, 0, 2,0)$.
      Cf. \citet{schmitt_4x1x_nn}.
  \end{enumerate}
\end{lemma}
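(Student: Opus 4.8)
The plan is to treat the three functions one at a time and, for each, check the two assertions in the statement separately: membership in the indicated class $\cN_1(\cdot)$, and the $(1,[0,1])$-triangle property. The first assertion is in every case immediate from reading off the network graph and quoting \Cref{fact:sa:ex}; all the (modest) content lies in the second, which I would establish by evaluating each function explicitly on the two halves $[0,1/2]$ and $[1/2,1]$.

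For the triangle property the uniform strategy is: show that on $[0,1/2]$ the function is a strictly increasing map carrying $0$ to $1$, and on $[1/2,1]$ a strictly decreasing map carrying $1$ to $0$. Since the two formulas then agree at $z=1/2$ with common value $1$, each function is continuous on $[0,1]$, and the definition of $(1,[0,1])$-triangle holds verbatim with interval boundaries $(a_1,a_2,a_3)=(0,1/2,1)$: one has $f(a_1)=f(a_3)=0$, $f(a_2)=1$, $f$ strictly increasing on the odd-numbered interval $[a_1,a_2]$, and strictly decreasing on the even-numbered interval $[a_2,a_3]$. The computations are short. For $f$ in part~1: on $[0,1/2]$, $\srelu(z-1/2)=0$ and $\srelu(z)=z$, so $f(z)=\srelu(2z)=2z$; on $[1/2,1]$, $\srelu(z)=z$ and $\srelu(z-1/2)=z-1/2$, so the argument of the outer rectifier is $2z-4(z-1/2)=2-2z\in[0,1]$ and hence $f(z)=2-2z$. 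For $g$ in part~2: for every $z\in[0,1]$ both inner rectifiers are in their linear regime, $\srelu(2z)=2z$ and $\srelu(2-2z)=2-2z$, so $g(z)=\min\{2z,2-2z\}$, which equals $2z$ for $z\le 1/2$ and $2-2z$ for $z\ge 1/2$. For $h$ in part~3: $h$ is the single quadratic $4z(1-z)$ with $h(0)=h(1)=0$ and $h(1/2)=1$, and $h'(z)=4-8z$ is positive on $(0,1/2)$ and negative on $(1/2,1)$, so $h$ is strictly monotone on each half.

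For the network memberships I would simply describe the underlying graphs and invoke \Cref{fact:sa:ex}: part~1 uses two standard ReLU gates $z\mapsto\srelu(z)$ and $z\mapsto\srelu(z-1/2)$ feeding a third standard ReLU gate, each a $(1,1,1)$-sa node; part~2 uses two standard ReLU gates $z\mapsto\srelu(2z)$ and $z\mapsto\srelu(2-2z)$ feeding a binary minimization gate, the latter being $(2,1,1)$-sa; part~3 is a single gate computing the degree-$2$ polynomial $4z(1-z)$, i.e.\ a semi-algebraic gate with no predicate polynomials. Counting nodes, layers, and (accounting for parameter reuse) distinct parameters is then immediate, giving the stated $\cN_1(\cdot)$ membership.

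I do not anticipate any genuine obstacle here: the entire argument is definition-unwinding together with a two-case evaluation of each function. The only mildly fussy point is the network bookkeeping — matching each gate's $(t,\alpha,\beta)$ triple against \Cref{fact:sa:ex} and tallying parameters — but it is purely mechanical and carries no mathematical content.
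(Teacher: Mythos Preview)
Your proposal is correct and essentially identical to the paper's own proof: both evaluate each function explicitly on the two halves $[0,1/2]$ and $[1/2,1]$, verifying the endpoint values and strict monotonicity (via the derivative for $h$). The only cosmetic difference is that the paper dispatches part~2 by observing $f=g$ on $[0,1]$ rather than computing $g$ separately, and the paper's proof omits the $\cN_1(\cdot)$ membership discussion entirely, treating it as immediate.
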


Lastly, consider the first example $f(z) = \srelu(2\srelu(z) - 4(\srelu(z-1/2))) = \min\{\srelu(2z), \srelu(2-2z)\}$,
whose graph linearly interpolates (in $\R^2$) between $(0,0)$, $(1/2,1)$, and $(1,0)$.
Consequently, $f\circ f$ along $[0,1/2]$ linear interpolates between $(0,0)$, $(1/4,1)$, and $(1/2,1)$, and
$f\circ f$ is analogous on $[1/2,1]$, meaning it has produced two copies of $f$ and then shrunken them horizontally
by a factor of 2.  This process repeats, meaning $f^k$ has $2^{k-1}$ copies of $f$, and grants the regularity
needed to use the Lebesgue measure in \Cref{fact:main}.

\begin{lemma}
  \label[lemma]{fact:triangle:relu:shape}
  Set $f(z) := \srelu(2\srelu(z) - 4\srelu(z-1/2)) \in \cN_1(2,1,1,1,1)$ (cf. \Cref{fact:triangles}).
  Let real $z \in [0,1]$ and positive integer $k$ be given,
  and choose the unique nonnegative integer $i_k \in \{0,\ldots,2^{k-1}\}$ and real $z_k \in [0, 1)$
  so that $z = (i_k + z_k)2^{1-k}$.
  Then
  \[
    f^k(z) = \begin{cases}
      2z_k
      &\textup{when $0 \leq z_k \leq 1/2$},
      \\
      2(1 - z_k)
      &\textup{when $1/2 < z_k < 1$}.
    \end{cases}
  \]
\end{lemma}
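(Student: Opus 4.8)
The plan is to induct on $k$, at each stage stripping the \emph{outermost} copy of $f$ from $f^k$. Everything hinges on one preliminary identity, which is also the base case $k=1$: on $[0,1]$ the map $f$ is the usual tent map. Indeed, as noted just before the lemma, $f(z)=\min\{\srelu(2z),\srelu(2-2z)\}$, so for $w\in[0,1]$ one has $f(w)=2w$ when $w\le 1/2$ and $f(w)=2-2w$ when $w\ge 1/2$; write $T$ for this piecewise-linear map. Since $T([0,1])=[0,1]$, an immediate induction shows $f^k$ maps $[0,1]$ into $[0,1]$, and consequently $f^{k+1}(z)=f(f^k(z))=T(f^k(z))$ for every $z\in[0,1]$. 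For $k=1$, the decomposition $z=(i_1+z_1)2^{0}$ forces $(i_1,z_1)=(0,z)$ when $z<1$ and $(1,0)$ when $z=1$, and in both cases $f^1(z)=T(z_1)$ is exactly the asserted formula (with $z=1$ giving $f(1)=0=2z_1$).

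For the inductive step, fix $z\in[0,1]$ with level-$(k+1)$ decomposition $z=(i_{k+1}+z_{k+1})2^{-k}$ and level-$k$ decomposition $z=(i_k+z_k)2^{1-k}$. The inductive hypothesis gives $f^k(z)=T(z_k)$ (the two-case formula of the lemma at level $k$ is precisely $T(z_k)$ for $z_k\in[0,1)$), hence $f^{k+1}(z)=T(T(z_k))$, while comparing the two decompositions yields the single identity $i_{k+1}+z_{k+1}=2i_k+2z_k$. If $z_k<1/2$ then $i_{k+1}=2i_k$ and $z_{k+1}=2z_k\in[0,1)$, so $T(z_k)=2z_k=z_{k+1}$ and $f^{k+1}(z)=T(z_{k+1})$, which is the claimed formula. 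If $z_k=1/2$ then $i_{k+1}=2i_k+1$, $z_{k+1}=0$, and $T(T(1/2))=T(1)=0=2z_{k+1}$. If $z_k>1/2$ then $i_{k+1}=2i_k+1$ and $z_{k+1}=2z_k-1\in(0,1)$, so $T(z_k)=2-2z_k=1-z_{k+1}$ and $f^{k+1}(z)=T(1-z_{k+1})$; since $1-z_{k+1}\le 1/2$ iff $z_{k+1}\ge 1/2$, evaluating $T$ gives $2z_{k+1}$ when $z_{k+1}\le 1/2$ and $2(1-z_{k+1})$ when $z_{k+1}>1/2$, matching the claim once more.

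I expect the only real work to be the bookkeeping: confirming that the half-open convention $z_{k+1}\in[0,1)$ is respected at the boundary values $z_k\in\{0,1/2\}$ and $z=1$, and that the integer part stays in range, i.e.\ $i_{k+1}\in\{0,\dots,2^k\}$ — the single delicate point being that $i_{k+1}=2i_k+1$ occurs only when $z_k>0$, which together with $z=(i_k+z_k)2^{1-k}\le 1$ forces $i_k\le 2^{k-1}-1$. There is no analytic content beyond the explicit description of $f$ on $[0,1]$, and the induction mirrors the triangle-doubling argument behind \Cref{fact:mono:gen:ub:1}, here specialized to the concrete ReLU construction.
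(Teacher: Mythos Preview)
Your argument is correct. Both you and the paper proceed by induction on $k$, but you peel from opposite ends of the composition: the paper writes $f^{l+1}=f^l\circ f$, uses the symmetry $f^{l+1}(z)=f^{l+1}(1-z)$ (which follows from the inductive hypothesis) to reduce to $z\in[0,1/2]$ where $f(z)=2z$, and then invokes the hypothesis at the single point $2z$. You instead write $f^{k+1}=f\circ f^k$, apply the hypothesis first to get $f^k(z)=T(z_k)$, and then compute $T(T(z_k))$ by case analysis on $z_k$ relative to $1/2$. The paper's route is a little shorter because the symmetry reduction dispatches half the domain in one line; your route trades that for a fully explicit three-case split and a direct verification that the integer/fractional decomposition at level $k{+}1$ matches up, which makes the bookkeeping (the half-open convention and the range of $i_{k+1}$) completely transparent. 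Either organization is fine here.
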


\subsection{Proof of \Cref{fact:main}}

The proof of \Cref{fact:main} now follows:
\Cref{fact:triangle:relu:shape} shows that a many-layered ReLU network can give rise to
a highly oscillatory and regular function $f^k$,
\Cref{fact:sa_crossing} shows that few-layered networks and (boosted) decision trees
give rise to functions with few oscillations,
and lastly \Cref{fact:mono:crossing_lb:2} shows how to combine these into an inapproximability result.

In this last piece, the proof averages over the possible offsets $y\in\R^{d-1}$ and considers univariate
problems after composing networks with the affine map $h_y(z) := (z,y)$.  In this way, the result
carries some resemblance to the random projection technique used in depth hierarchy theorems
for boolean functions \citep{hastad_thesis,rocco_apx},
as well as earlier techniques on complexities of multivariate sets \citep{vitushkin_1,vitushkin_2},
albeit in an extremely primitive form (considering variations along only one dimension).

\begin{proofof}{\Cref{fact:main}}
  Set $h(z) := \srelu(2\srelu(z) - 4\srelu(z-1/2))$ (cf. \Cref{fact:triangles}),
  and define $f_0(z) := h^{k^3+4}(z)$ and $f:\R^d\to\R$ as $f(x) = f_0(x_1)$.
  Let $\cI_f$ denote the pieces of $\tilde f_0$,
  meaning $|\cI_f| = \Cr(f_0)$,
  and \Cref{fact:mono:gen:ub:2} grants $\Cr(f_0) = 2^{k^3+4}+1$.
  Moreover, by \Cref{fact:triangle:relu:shape},
  for any $U\in \cI_f$, $f_0-1/2$ is a triangle with height 1/2
  and base either $2^{-k-1}$ (when $0\in U$ or $1\in U$)
  or $2^{-k}$, whereby
  $\int_U |f_0(x) - 1/2|dx \geq 2^{-k-1}/4 \geq |\cI_f|/16$
  (which has thus made use of the special regularity of $h$).

  Now for any $y\in\R^{d-1}$ define the map $p_y : \R\to\R^d$ as $p_y(z) := (z,y)$.
  If $g$ is a semi-algebraic network with $\leq k$ layers and $m\leq 2^k/(t\alpha\beta)$ total nodes,
  then \Cref{fact:sa_crossing} grants
  $\Cr(g\circ p_y) \leq 2 (2tm\alpha/k)^k \beta^{k^2} \leq 4(tm\alpha\beta)^{k^2} \leq 2^{k^3+2}$.
  Otherwise, $g$ is $(t,2^{k^3}/t)$-bdt,
  whereby \Cref{fact:sa_crossing} gives $\Cr(g\circ p_y) \leq 2t2^{k^3}/t \leq 2^{k^3+2}$ once again.

  By \Cref{fact:mono:crossing_lb:2}, for any $y\in\R^{d-1}$, $\Cr(f\circ p_y) = \Cr(f_0)$, and
  \begin{align*}
        \int_{[0,1]} |f(p_y(z)) - g(p_y(z))| dz
    &= \sum_{U \in \cI_f} \int_U |(f\circ p_y)(z) - (g\circ p_y)(z)| dz
    \\
    &\geq \sum_{U \in \cI_f} \int_U |(f\circ p_y)(z) - 1/2|
    \1[\forall z\in U\centerdot \widetilde{(f\circ p_y)}(z) \neq \widetilde{(g\circ p_y)}(z)] dz
    \\
    &\geq \frac {1}{16|\cI_f|}\sum_{U \in \cI_f}
    \1[\forall z\in U\centerdot \widetilde{(f\circ p_y)}(z) \neq \widetilde{(g\circ p_y)}(z)] dz
    \\
    &\geq \frac 1 {32} \left(1 - \frac {2\Cr(g\circ p_y)}{\Cr(f\circ p_y)}\right)
    \geq \frac 1 {32} \left(1 - \frac {2(2^{k^3+2})}{2^{k^3+4}}\right)
    \geq \frac 1 {64}.
  \end{align*}
  To finish,
  \begin{align*}
    \int_{[0,1]^d} |f(x) - g(x)|dx
    &=
    \int_{[0,1]^{d-1}} \int_{[0,1]} |(f\circ p_y)(z) - (g\circ p_y)(z)| dz dy
    \geq
    \frac {1}{64}.
  \end{align*}
\end{proofof}

Using nearly the same proof, but giving up on continuous uniform measure, it is possible
to handle other distances and more flexible target functions.

\begin{theorem}
  \label{fact:main:gen}
  Let integer $k\geq 1$
  and function $f:\R\to\R$ be given where $f$ is $(1,[0,1])$-triangle,
  and define $h:\R^d\to\R$ as $h(x) := f^k(x_1)$.
       For every $y\in\R^{d-1}$, define the affine function $p_y(z) := (z,y)$.
  Then there exist Borel probability measures $\mu$ and $\nu$ over $[0,1]^d$
  where $\nu$ is discrete uniform on $2^k+1$ points and $\mu$ is continuous and positive on exactly $[0,1]^d$
  so that
  every $g:\R^d\to\R$ with
  $\Cr(g\circ p_y) \leq 2^{k-2}$ for every $y\in\R^{d-1}$
  satisfies
  \begin{align*}
    \int |h - g|d\mu \geq \frac 1 {32},
    \qquad
    \int |\tilde h - \tilde g|d\mu \geq \frac 1 {8},
    \qquad
    \int |h - g|d\nu \geq \frac 1 {8},
    \qquad
    \int |\tilde h - \tilde g|d\nu \geq \frac 1 {4}.
  \end{align*}
\end{theorem}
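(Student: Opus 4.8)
The plan is to reduce everything to a one-dimensional problem and then feed it to the two tools already in hand: \Cref{fact:mono:gen:ub:1} for the shape of $f^k$, and \Cref{fact:mono:crossing_lb:2} to convert a gap in crossing numbers into a gap in $L^1$. First, $k-1$ applications of \Cref{fact:mono:gen:ub:1} show $f_0 := f^k$ is $(2^{k-1},[0,1])$-triangle; reading off its interval structure, $f_0$ is strictly monotone on each of $2^k$ pieces, the classifier $\tilde f_0$ flips exactly once on each, so restricted to $[0,1]$ it has $2^k+1$ constant pieces $U_1,\dots,U_{2^k+1}$, and the $2^k+1$ interval endpoints $a_1<\dots<a_{2^k+1}$ are precisely the valleys ($f_0=0$) and peaks ($f_0=1$), on which $\tilde f_0$ strictly alternates. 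By strict monotonicity, each $U_j$ contains a nondegenerate closed subinterval $V_j$ on which $|f_0-\tfrac12|\ge\tfrac14$, namely $\{f_0\ge\tfrac34\}\cap U_j$ when $\tilde f_0\equiv 1$ on $U_j$ and $\{f_0\le\tfrac14\}\cap U_j$ when $\tilde f_0\equiv 0$ on $U_j$. Both measures will be supported on the line $\{(z,y_0):z\in[0,1]\}$ (spread over the remaining coordinates in the case of $\mu$), so by Fubini every assertion reduces to a univariate statement about $g_y:=g\circ p_y$, which satisfies $\Cr(g_y)\le 2^{k-2}$ by hypothesis; when $k=1$ this bound is met by no function, so the theorem is vacuous and we assume $k\ge 2$.

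For $\mu$, take $\mu:=\mu_1\otimes\mathrm{Unif}([0,1]^{d-1})$ with $\mu_1$ any continuous density that is positive on exactly $[0,1]$ and satisfies $\mu_1(V_j)\ge\tfrac1{2(2^k+1)}$ for every $j$ — feasible since the $V_j$ are disjoint of positive length and $\sum_j\tfrac1{2(2^k+1)}=\tfrac12\le 1$ — so $\mu$ is continuous and positive on exactly $[0,1]^d$. Fix $y$. Applying \Cref{fact:mono:crossing_lb:2} on $[0,1]$ (its proof is unchanged over a bounded interval, and $\Cr(g_y)\le 2^{k-2}$ over $\R$ a fortiori holds on $[0,1]$) with $\Cr(f_0)=2^k+1$, strictly more than a $\tfrac14$ fraction of the $U_j$ have $\tilde f_0\ne\tilde g_y$ throughout $U_j$. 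On such a $U_j$, and in particular on $V_j$, the elementary fact that $\1[a\ge\tfrac12]\ne\1[b\ge\tfrac12]$ forces $|a-b|\ge|a-\tfrac12|$ gives $|f_0-g_y|\ge|f_0-\tfrac12|\ge\tfrac14$ pointwise, hence $\int_{U_j}|f_0-g_y|\,d\mu_1\ge\tfrac14\mu_1(V_j)\ge\tfrac1{8(2^k+1)}$ and $\int_{U_j}|\tilde f_0-\tilde g_y|\,d\mu_1=\mu_1(U_j)\ge\tfrac1{2(2^k+1)}$. Summing over the $>\tfrac{2^k+1}{4}$ disagreeing pieces and integrating over $y$ yields $\int|h-g|\,d\mu\ge\tfrac1{32}$ and $\int|\tilde h-\tilde g|\,d\mu\ge\tfrac18$.

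For $\nu$, take $\nu$ uniform on the $2^k+1$ distinct points $(a_i,y_0)$ with $y_0\in[0,1]^{d-1}$ arbitrary. Since $f_0(a_i)\in\{0,1\}$ we have $|h(a_i,y_0)-\tfrac12|=\tfrac12$, and $\tilde f_0(a_1),\dots,\tilde f_0(a_{2^k+1})$ alternate. As $\tilde g_{y_0}$ is constant on each of its $\le 2^{k-2}$ pieces, a run of $n$ consecutive $a_i$ lying in one piece can be matched on at most $\lceil n/2\rceil$ of them; summing, the matches number at most $\tfrac12\big((2^k+1)+2^{k-2}\big)$, so the mismatches number at least $\tfrac12(3\cdot 2^{k-2}+1)$, a fraction $\ge\tfrac14$ of the $2^k+1$ points (equivalently $2^k+2\ge 0$). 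Thus $\int|\tilde h-\tilde g|\,d\nu\ge\tfrac14$, and combining with $|h-g|\ge|h-\tfrac12|\cdot\1[\tilde h\ne\tilde g]=\tfrac12\,\1[\tilde h\ne\tilde g]$ on the support of $\nu$ gives $\int|h-g|\,d\nu\ge\tfrac18$.

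The one genuinely new point beyond \Cref{fact:main} is that a general $(1,[0,1])$-triangle need not be the regular piecewise-linear one, so the pieces $U_j$ may have wildly different lengths and the Lebesgue measure no longer works; the remedy above is to tailor $\mu$ to $f$ by loading mass onto the subintervals $V_j$ where $f_0$ stays away from $\tfrac12$. The remaining difficulties are bookkeeping: justifying the use of \Cref{fact:mono:crossing_lb:2} over $[0,1]$ rather than $\R$ (legitimate since $g_y$ is only constrained there), constructing a continuous positive density with the prescribed masses on the $V_j$, and the open/closed-endpoint care needed when counting matches among the $a_i$. I anticipate no obstacle beyond these routine matters.
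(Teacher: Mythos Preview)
Your proposal is correct and matches the paper's approach: both take $\nu$ uniform on the $2^k+1$ peaks and valleys of $f^k$, and both build $\mu$ as a product of $\mathrm{Unif}([0,1]^{d-1})$ with a one-dimensional measure that loads at least $\tfrac{1}{2s}$ mass on a subinterval of each piece $U_j$ where $|f^k-\tfrac12|\ge\tfrac14$ (the paper obtains these subintervals via a continuity $\delta$ around the $z_i$, you via the level sets $\{f_0\ge\tfrac34\}$ or $\{f_0\le\tfrac14\}$), then invoke \Cref{fact:mono:crossing_lb:2}. The only cosmetic difference is that for $\nu$ you redo the counting directly via a runs argument, whereas the paper reuses \Cref{fact:mono:crossing_lb:2} through the observation that each peak/valley $z_i$ lies in the corresponding piece $U_i$; the two routes give the same bound.
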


\section{Limitations of depth}
\label{sec:vc}

\Cref{fact:main:gen} can be taken to say: there exists a labeling of $\Theta(2^{k^3})$ points
which is realizable by a network of depth and size $\Theta(k^3)$,
but can not be approximated by networks with depth $k$ and size $o(2^k)$.
On the other hand, this section will sketch the proof of \Cref{fact:sa_fit_few},
which implies that these $\Theta(k^3)$ depth networks realize relatively few different
labellings.
The proof is a quick consequence of the VC dimension of semi-algebraic
networks (cf. \Cref{fact:vc_sa}) and the following fact,
where $\Sh(\cdot)$ is used to denote the \emph{growth function}
\citep[Chapter 3]{anthony_bartlett_nn}.

\begin{lemma}
  \label[lemma]{fact:vc_lb}
  Let any function class $\cF$
  and any distinct points $(x_i)_{i=1}^n$   be given.
  Then with probability at least $1-\delta$ over a uniform random draw of labels
  $(y_i)_{i=1}^n$ (with $y_i \in \{-1,+1\}$),
     \[
    \inf_{f\in\cF}  \frac 1 n \sum_{i=1}^n \1[\tilde f(x_i) \neq y_i]
    \geq \frac 1 2 \left(1 - \sqrt{
        \frac {\ln(\Sh(\cF;n)) + \ln(1/\delta)}{2n}
    }\right).
              \]
\end{lemma}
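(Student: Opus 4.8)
The plan is to reduce the statement to a standard uniform-convergence bound applied to a cleverly chosen loss. First I would pass from the function class $\cF$ to the induced class of binary classifiers $\tilde\cF := \{\tilde f : f \in \cF\}$, noting that the infimum on the left only depends on the sign patterns $(\tilde f(x_i))_{i=1}^n$, and that the number of distinct such patterns realized on $(x_i)_{i=1}^n$ is at most $\Sh(\cF; n)$ by definition of the growth function. So it suffices to show that, with probability $\geq 1-\delta$ over uniform random $(y_i)_{i=1}^n$, \emph{every} one of these finitely many patterns disagrees with the label vector on at least a $\tfrac12\bigl(1 - \sqrt{(\ln\Sh(\cF;n) + \ln(1/\delta))/(2n)}\bigr)$ fraction of the points.

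The key step is a concentration-plus-union-bound argument. Fix one classifier $b : \{x_i\} \to \{-1,+1\}$ among the at most $\Sh(\cF;n)$ realized ones. Under uniform random labels, each indicator $\1[b(x_i) \neq y_i]$ is an independent $\mathrm{Bernoulli}(1/2)$, so the empirical disagreement $\tfrac1n\sum_i \1[b(x_i)\neq y_i]$ has mean $1/2$. By Hoeffding's inequality, $\Pr[\tfrac1n\sum_i \1[b(x_i)\neq y_i] < \tfrac12 - \epsilon] \leq e^{-2n\epsilon^2}$. Taking a union bound over the at most $\Sh(\cF;n)$ patterns, the probability that \emph{some} realizable pattern achieves disagreement below $\tfrac12 - \epsilon$ is at most $\Sh(\cF;n)e^{-2n\epsilon^2}$. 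Setting this equal to $\delta$ and solving gives $\epsilon = \sqrt{(\ln\Sh(\cF;n) + \ln(1/\delta))/(2n)}$, which is exactly the term subtracted inside the parenthesis. On the complementary event of probability $\geq 1-\delta$, every realizable $\tilde f$ — hence the infimum over $f\in\cF$ — has empirical disagreement at least $\tfrac12 - \epsilon$, which rearranges to the claimed bound.

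I expect no serious obstacle here: this is a textbook Sauer–Shelah-style argument, and the only points needing slight care are (i) confirming that the relevant quantity genuinely depends on $\cF$ only through sign patterns, so that $\Sh$ and not some larger covering number enters, and (ii) making sure the one-sided Hoeffding bound is applied in the correct direction (we want to rule out \emph{small} disagreement, i.e.\ a lower tail). A cosmetic subtlety is that if $\Sh(\cF;n) \cdot e^{-2n\epsilon^2} > 1$ for the prescribed $\epsilon$ the bound is vacuous, but then the right-hand side of the lemma is $\leq 0$ and the inequality holds trivially, so no case analysis is actually required.
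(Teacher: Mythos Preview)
Your proposal is correct and follows essentially the same approach as the paper: reduce to the at most $\Sh(\cF;n)$ realized dichotomies, apply one-sided Hoeffding to each (since $\1[b(x_i)\neq y_i]$ is i.i.d.\ Bernoulli$(1/2)$ under uniform labels), take a union bound, and solve $\Sh(\cF;n)e^{-2n\epsilon^2}=\delta$ for $\epsilon$. The paper phrases the Hoeffding step via a symmetry reduction to $\Pr[\tfrac1n\sum Y_i < \tfrac12-\epsilon]$, but this is exactly your observation that the disagreement indicators are fair coins.
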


The proof of the preceding result is similar to proofs of the Gilbert-Varshamov packing bound via Hoeffding's inequality
\citep[Lemma 13.5]{duchi_info_theory}.  Note that a
similar result was used by \citeauthor{warren} to prove rates of approximation of continuous functions by polynomials,
but without invoking Hoeffding's inequality \citep[Theorem 7]{warren}.

The remaining task is to control the VC dimension of semi-algebraic networks.  To this end, note
the following generalization of \Cref{fact:vc_sa}, which further provides that semi-algebraic networks
compute functions which are polynomial when restricted to certain polynomial regions.

\begin{lemma}
  \label[lemma]{fact:pp_vc}
  Let neural network graph $\fG$ be given with $\leq p$ parameters, $\leq l$ layers, and $\leq m$ total nodes,
  and suppose every gate is $(t,\alpha,\beta)$-sa.  Then
  \[
    \VC(\cN(\fG))
    \leq
        6p(l+1)\big(
      \ln(2p(l+1)) + \ln(8emt\alpha) + l\ln(\beta)
    \big).
  \]
  Additionally, given any $n\geq p$ data points, there exists a partition $\cS$ of
  $\R^p$ where each $S\in\cS$ is an intersection of predicates $\1[q \diamond 0]$ with $\diamond \in \{<,\geq\}$ and
  $q$ has degree $\leq \alpha\beta^{l-1}$, such that $F_\fG(x_i,\cdot)$ restricted to each $S\in\cS$ is a fixed polynomial of degree $\leq \beta^l$
  for every example $x_i$,
            with
   $|\cS|
   \leq
   \left(8enmt \alpha \beta^{l}\right)^{pl}$
   and
   $
   \Sh(\cN(\fG);n)
   \leq
   \left(8enmt \alpha \beta^{l}\right)^{p(l+1)}$
\end{lemma}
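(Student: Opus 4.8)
The plan is to prove \Cref{fact:pp_vc} by the standard ``polynomial region'' argument of \citet{warren} and \citet[Theorem 8.8]{anthony_bartlett_nn}, adapted to semi-algebraic gates. The heart of the matter is the second half of the statement: once I produce the partition $\cS$ of parameter space $\R^p$ into at most $(8enmt\alpha\beta^l)^{pl}$ pieces on which every $x_i \mapsto F_\fG(x_i,w)$ is a fixed polynomial in $w$ of degree $\leq \beta^l$, the bound on $\Sh(\cN(\fG);n)$ follows by counting sign patterns of these polynomials (a classical bound: $n$ polynomials of degree $\leq \beta^l$ in $p$ variables realize at most $(2en\beta^l/p)^p \leq (8en\beta^l)^p$ sign patterns inside each region, times $|\cS|$ regions), and the VC bound follows by the usual inequality $2^{\VC} \leq \Sh(\cN(\fG);\VC)$ combined with the estimate $2^N > N^{cp(l+1)}$ forcing $N \leq 6p(l+1)(\ln(2p(l+1)) + \cdots)$, which is a routine logarithm manipulation.

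The construction of $\cS$ is an induction over layers, mirroring the induction used for \Cref{fact:sa_crossing}. I would fix the $n$ data points $x_1,\ldots,x_n$ and maintain the inductive hypothesis that after processing layers $0,\ldots,s$, there is a partition $\cS_s$ of $\R^p$ such that on each $S \in \cS_s$, for every example $x_i$ and every node $f$ in layers $\leq s$, the map $w \mapsto f(w,x_i)$ is a fixed polynomial in $w$ of degree $\leq \beta^s$; moreover each $S$ is cut out by predicates $\1[q \diamond 0]$ with $q$ of degree $\leq \alpha\beta^{s-1}$ in $w$. The inductive step processes a $(t,\alpha,\beta)$-sa gate at layer $s+1$: plugging the (fixed, degree-$\leq\beta^s$) polynomial parents into the gate's $t$ predicate polynomials $q_i$ (each of degree $\leq \alpha$) yields $t$ polynomials in $w$ of degree $\leq \alpha\beta^s$; each such polynomial, across all $n$ examples, partitions each current region $S$ further by its sign — and by the Warren/Milnor-type count, $nt$ polynomials of degree $\leq \alpha\beta^s$ in $p$ variables split $\R^p$ into at most $(8enmt\alpha\beta^s / p)^p$ sign cells (I will absorb the $m$ factor since a node may appear $m$ times and we process layer by layer). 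Within each such refined cell every predicate $\1[q_i \diamond 0]$ is constant, so the gate's output is the fixed summand polynomial $p_j$ of degree $\leq \beta$ composed with the degree-$\leq\beta^s$ parents, giving a fixed polynomial of degree $\leq \beta^{s+1}$; this re-establishes the hypothesis at level $s+1$. Iterating over $l$ layers multiplies the region counts, giving $|\cS| = |\cS_l| \leq \prod_{s=1}^{l}(8enmt\alpha\beta^s/p)^p \leq (8enmt\alpha\beta^l)^{pl}$ after bounding $\prod_s \beta^s \leq \beta^{l^2}$ crudely by $\beta^{l\cdot l}$ and folding exponents, and at the end each $F_\fG(x_i,\cdot)$ restricted to $S$ is a fixed polynomial of degree $\leq \beta^l$, with the defining predicates of $\cS$ having degree $\leq \alpha\beta^{l-1}$ as claimed.

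I expect two places to require care. The first, and genuine, obstacle is getting the exponents and the constant $8e$ to come out exactly as stated rather than merely up to constants: this means being disciplined about which quantity ($n$, $m$, $t$, $\alpha$, powers of $\beta$) enters the base of the region count at each layer and using the sharp form of the sign-pattern lemma (the $(2en D/p)^p$ bound, with $D$ the degree), then checking that the telescoping product of per-layer bounds collapses into a single $(8enmt\alpha\beta^l)^{pl}$ — the paper's statement suggests the authors indeed track this tightly, so I would follow the bookkeeping in \citet[Theorem 8.8]{anthony_bartlett_nn} closely and only diverge where the gate is semi-algebraic rather than piecewise polynomial (the difference being that the $t$ predicates inside a semi-algebraic gate contribute $t$ extra sign polynomials per node rather than the $t-1$ breakpoints of a $(t,\alpha)$-poly activation, which only changes constants). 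The second, minor, point is that the predicate polynomials $q_i$ have degree $\alpha$ in the \emph{parents}, hence degree $\alpha\beta^{s}$ in $w$ after one composition but the statement claims the final defining predicates of $\cS$ have degree $\leq \alpha\beta^{l-1}$; I would reconcile this by noting the deepest predicates are evaluated on layer-$(l-1)$ outputs, which are polynomials of degree $\leq\beta^{l-1}$ in $w$, so composing with a degree-$\leq\alpha$ predicate gives degree $\leq\alpha\beta^{l-1}$, and predicates from earlier layers have strictly smaller degree, so $\alpha\beta^{l-1}$ is a uniform upper bound. Once $\cS$ and the $\Sh$ bound are in hand, deriving \Cref{fact:vc_sa} and \Cref{fact:sa_fit_few} is immediate by plugging into \Cref{fact:vc_lb} and simplifying.
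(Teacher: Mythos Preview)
Your proposal is correct and follows essentially the same route as the paper: an induction over layers that refines a partition of $\R^p$ using the Warren-type sign-pattern count (packaged in the paper as \Cref{fact:poly_counts}), tracks the degrees $\beta^s$ of node outputs and $\alpha\beta^{s-1}$ of region predicates exactly as you describe, telescopes the per-layer region bounds into $|\cS|\leq(8enmt\alpha\beta^{l})^{pl}$, and then extracts the VC bound from $\Sh$ by the logarithm manipulation you outline. Your two anticipated trouble spots (the precise constants and the $\alpha\beta^{l-1}$ predicate degree) are handled in the paper just as you predict.
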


The proof follows the same basic structure of the VC bound for networks with piecewise polynomial
activation functions \citep[Theorem 8.8]{anthony_bartlett_nn}.  The slightly modified proof here is also very
similar to the proof of \Cref{fact:sa_crossing}, performing an induction up through the layers of the network,
arguing that each node computes a polynomial after restricting attention to some range of parameters.
The proof of \Cref{fact:pp_vc} manages to be multivariate (unlike \Cref{fact:sa_crossing}), though this requires
arguments due to \citet{warren} which are significantly more complicated than those of \Cref{fact:sa_crossing}
(without leading to a strengthening of \Cref{fact:main}).

One minor departure from the VC dimension proof of piecewise polynomial networks (cf. \citep[Theorem 8.8]{anthony_bartlett_nn})
is the following \namecref{fact:poly_counts}, which is used to track the number of regions
with the more complicated semi-algebraic networks.
Despite this generalization, the VC dimension bound is basically the same as for piecewise polynomial networks.

\begin{lemma}
  \label[lemma]{fact:poly_counts}
  Let a set of polynomials $\cQ$ be given where each $\cQ\ni q:\R^p\to\R$
  has degree $\leq \alpha$.
  Define an initial family $\cS_0$ of subsets of $\R^p$ as
    $
    \cS_0 := \big\{
      \{ a\in\R^p : q(a) \diamond 0 \}
      \ :\ q\in\cQ, \diamond \in \{ <, \geq \}
    \big\}.
  $
    Then the collection $\cS$ of all nonempty intersections of elements of $\cS_0$ satisfies
  $
        |\cS| \leq 2 \left(\frac {4e|\cQ|\alpha}{p}\right)^p.
  $
\end{lemma}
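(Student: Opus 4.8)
The plan is to reduce to a classical bound on the number of sign patterns realized by a family of polynomials, namely the Milnor--Thom / Warren-type estimate that $m$ polynomials in $p$ variables of degree $\leq \alpha$ partition $\R^p$ into at most $(4em\alpha/p)^p$ "connected sign regions" (for $m \geq p$), or more precisely realize at most that many distinct sign vectors in $\{-,0,+\}^m$. First I would observe that each nonempty intersection $S \in \cS$ is determined by a choice, for each $q \in \cQ$, of the predicate $\1[q < 0]$ or $\1[q \geq 0]$ (or neither, if $q$ does not appear in the intersection defining $S$); but since we only care about which intersections are nonempty, and $\{q < 0\} \cup \{q \geq 0\} = \R^p$, every nonempty $S$ can be written using \emph{all} $|\cQ|$ polynomials, picking one side of each. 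Hence $|\cS|$ is at most the number of sign vectors $\sigma \in \{<0, \geq 0\}^{|\cQ|}$ for which $\bigcap_q \{a : q(a)\, \sigma_q\, 0\}$ is nonempty.

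Next I would bound this count. The number of such "relaxed" sign patterns is at most the number of genuine sign patterns in $\{-,0,+\}^{|\cQ|}$ realized by $\cQ$, because each point $a$ has a genuine sign vector which refines to a unique choice of $<0$ vs.\ $\geq0$ for each coordinate. By the standard sign-pattern bound (this is the $\alpha\beta^l$-type counting used elsewhere in the paper, e.g.\ in \Cref{fact:pp_vc}, and traces to \citeauthor{warren}), the number of sign patterns realized by $m = |\cQ|$ polynomials of degree $\leq \alpha$ in $\R^p$ is at most $2(4em\alpha/p)^p$ whenever $m \geq p$; when $m < p$ a trivial bound of $3^m \leq 2(4em\alpha/p)^p$ also holds since each factor $4e\alpha/p \cdot m/1 \geq \ldots$ — I would just quote the clean form $2(4e|\cQ|\alpha/p)^p$ valid in all cases. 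Combining, $|\cS| \leq 2(4e|\cQ|\alpha/p)^p$, which is exactly the claim.

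The main obstacle I anticipate is not conceptual but bookkeeping: being careful that the definition of $\cS$ (\emph{all} nonempty intersections of elements of $\cS_0$, where $\cS_0$ already contains both $\{q<0\}$ and $\{q\geq0\}$ for each $q$) does not secretly allow more sets than the sign-pattern count — e.g.\ one must check that including an intersection over only a \emph{subset} of $\cQ$ does not create a region not already counted, which is immediate because such a region is a union of full-sign-vector regions but, being also required to be a single element of $\cS$, must itself be nonempty and hence contain at least one full sign pattern; distinct elements of $\cS$ are disjoint, so the map $S \mapsto (\text{sign vector of any point of } S)$ need not be injective — and here is the real subtlety — so instead I would map $S$ to the \emph{set} of relaxed sign vectors consistent with $S$ being nonempty, or more simply just bound $|\cS|$ directly by noting the elements of $\cS$ are pairwise disjoint nonempty sets each of which is a union of the atoms of the arrangement cut out by $\cQ$, and the number of such atoms is itself at most $2(4e|\cQ|\alpha/p)^p$ by the sign-pattern bound. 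That last phrasing sidesteps the injectivity worry entirely, so I would write the proof that way.
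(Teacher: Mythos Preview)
Your reduction to the Warren-type sign-pattern bound is exactly the paper's route: the paper too defines a set $V$ of realized sign vectors of the $q_i$ and invokes the polynomial growth-function estimate \citep[Theorem 8.3]{anthony_bartlett_nn} to obtain $|V|\le 2(4e|\cQ|\alpha/p)^p$, then argues $|\cS|\le|V|$.

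Your final resolution of the bookkeeping, however, contains a real error. You assert that ``the elements of $\cS$ are pairwise disjoint,'' but under the literal reading of the statement (all nonempty intersections of elements of $\cS_0$) this is false: $\{q_1<0\}$ and $\{q_1<0\}\cap\{q_2<0\}$ are both in $\cS$ and nested. Worse, under that literal reading the bound itself fails --- with $p=1$, $\alpha=1$, and $|\cQ|=m$ generic points on the line, $\cS$ contains all $\binom{m}{2}$ half-open intervals $[a_i,a_j)$, while the claimed bound is only $8em$ --- so a valid disjointness argument here would be proving a false inequality. The fix (which the paper's own injection into $V$ also leaves implicit) is to read $\cS$ as the \emph{atoms} of the arrangement, namely the nonempty full intersections $\bigcap_{i\in A}\{q_i<0\}\cap\bigcap_{i\notin A}\{q_i\ge0\}$ over $A\subseteq[|\cQ|]$; this is the only case actually used downstream in \Cref{fact:pp_vc}, where $\cS$ is a partition of $\R^p$. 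Under that reading the atoms \emph{are} pairwise disjoint and in bijection with the realized $\{<,\ge\}$-sign vectors, so your first approach (``every nonempty $S$ can be written using all $|\cQ|$ polynomials, picking one side of each'') is already correct and the subtlety you worried about never arises.
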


\section{Bibliographic notes and open problems}
\label{sec:bib}

Arguably the first approximation theorem of a big class by a smaller one is the Weierstrass Approximation Theorem,
which states that polynomials uniformly approximate continuous functions over compact sets \citep{weierstrass_apx}.
Refining this,
\citet{kolmogorov_width} gave a bound on how well subspaces of functions can approximate continuous functions,
and \citet{vitushkin_1,vitushkin_2} showed a similar bound for approximation by polynomials in
terms of
the polynomial degrees, dimension, and modulus of continuity of the target function.
\citet{warren} then gave an alternate proof and generalization of this result, in the process effectively
proving the VC dimension of polynomials
(developing tools still used to prove the VC dimension of neural networks \citep[Chapters 7-8]{anthony_bartlett_nn}),
and producing an analog to \Cref{fact:sa_fit_few} for polynomials.

The preceding results, however, focused on separating large classes (e.g., continuous functions of bounded modulus) from small classes (polynomials
of bounded degree).
Aiming to refine this, depth hierarchy theorems in circuit complexity separated circuits of a certain depth from circuits of a slightly
smaller depth.
As mentioned in \Cref{sec:intro}, the seminal result here is due to \citet{hastad_thesis}.
For architectures closer to neural networks, \emph{sum-product networks} (summation and product nodes) have been analyzed by
\citet{bengio_sumproduct_separation} and more recently \citet{martens_sumproduct},
and networks of linear threshold functions in 2 and 3 layers by \citet{kane_williams_ltf};
note that both polynomial gates (as in sum-product networks)
and linear threshold gates are semi-algebraic gates.
Most closely to the present
work (excluding \citep{mjt_easy_relu}, which is a vastly simplified account),
\citet{ohad_nn_apx} analyze 2- and 3-layer networks with general activation functions composed with affine mappings,
showing separations which are exponential in the input dimension.
Due to this result and also recent advances in circuit complexity \citep{rocco_apx},
it is natural to suppose \Cref{fact:main} can be strengthened to separating $k$ and $k+1$ layer networks when dimension $d$ is large;
however, none of the earlier works give a tight sense of the behavior as $d\downarrow 1$.

The triangle wave target functions considered here (e.g., cf. \Cref{fact:triangles}) have appeared in various forms throughout the literature.
General properties of piecewise affine highly oscillating functions were investigated by
\citet{szymanski_mccane_folding} and \citet{bengio_linear_regions}.
Also, \citet{schmitt_4x1x_nn} investigated the map $z\mapsto 4z(1-z)$ (as in \Cref{fact:triangles}) to show that sigmoidal networks
can not approximate high degree polynomials via an analysis similar to the one here,
however looseness in the VC bounds for sigmoidal networks prevented exponential separations and depth hierarchies.

A tantalizing direction for future work is to characterize not just one difficult function (e.g., triangle functions as in \Cref{fact:triangles}),
but many, or even all functions which are not well-approximated by smaller depths.  Arguably, this direction could have
value in machine learning, as discovery of such underlying structure could lead to algorithms to recover it.
As a trivial example of the sort of structure which could arise,
considering the following \namecref{fact:sym_rep},
stating that any symmetric signal may be repeated by pre-composing it with the ReLU triangle function.

\begin{proposition}
  \label[proposition]{fact:sym_rep}
  Set $f(z) := \srelu(2\srelu(z) - 4\srelu(z-1/2))$ (cf. \Cref{fact:triangles}),
  and let any $g:[0,1]\to[0,1]$ be given with $g(z) = g(1-z)$.
  Then $h := g\circ f^k$ satisfies $h(x) = h(x + i2^k) = g(x2^k)$ for every real $x\in [0,2^{-k}]$ and integer $i\in \{0,\ldots,2^{-k}-1\}$;
  in other words, $h$ is $2^k$ repetitions of $g$ with graph scaled horizontally and uniformly to fit within $[0,1]^2$.
\end{proposition}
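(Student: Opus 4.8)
The plan is to reduce everything to \Cref{fact:triangle:relu:shape}, which already records the piecewise-linear shape of $f^k$. First I would reinterpret that lemma's decomposition $z = (i_k + z_k)2^{1-k}$ as cutting $[0,1]$ into $2^{k-1}$ consecutive intervals of length $2^{1-k}$, on each of which $f^k$ traces a full ``tent'' $0\to 1\to 0$; splitting every tent at its peak then gives $2^k$ consecutive intervals $I_i := [\,i2^{-k},\,(i+1)2^{-k}\,]$, for $i\in\{0,\dots,2^k-1\}$, on each of which $f^k$ is affine and a bijection onto $[0,1]$. Unwinding the two cases of \Cref{fact:triangle:relu:shape} gives, for $x\in I_i$,
\[
  f^k(x) = \begin{cases}
    (x - i2^{-k})\,2^k & i \text{ even (the increasing half of a tent)},\\[2pt]
    1 - (x - i2^{-k})\,2^k & i \text{ odd (the decreasing half of a tent)}.
  \end{cases}
\]

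Next I would fix $x\in[0,2^{-k}]$ and $i\in\{0,\dots,2^k-1\}$, put $u := x2^k \in [0,1]$, and note that $x + i2^{-k}\in I_i$ with $(x+i2^{-k}) - i2^{-k} = x$, so the display above yields $f^k(x+i2^{-k}) = u$ when $i$ is even and $f^k(x+i2^{-k}) = 1-u$ when $i$ is odd. Composing with $g$,
\[
  h(x + i2^{-k}) \;=\; g\big(f^k(x+i2^{-k})\big) \;=\; \begin{cases} g(u) & i \text{ even},\\[2pt] g(1-u) & i \text{ odd},\end{cases}
\]
and the hypothesis $g(z) = g(1-z)$ collapses both cases to $g(u) = g(x2^k)$. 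Specializing to $i=0$ gives $h(x) = g(x2^k)$, whence $h(x+i2^{-k}) = g(x2^k) = h(x)$ for every admissible $i$. In other words $h$ restricted to $[0,2^{-k}]$ is exactly $z\mapsto g(z2^k)$, and this block recurs $2^k$ times across $[0,1]$ — the asserted ``$2^k$ repetitions of $g$, scaled horizontally'' picture (the graph lying in $[0,1]^2$ since $g$ maps into $[0,1]$).

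I do not expect a genuine obstacle here: the entire content sits in \Cref{fact:triangle:relu:shape}, and the only points needing care are bookkeeping — reconciling the $2^{k-1}$ tents of \Cref{fact:triangle:relu:shape} with the $2^k$ monotone half-tents used above, handling the half-open convention $z_k\in[0,1)$ at the endpoints shared by consecutive $I_i$, and the boundary value $z=1$. Since $g$, $f$, and hence $h$ are all continuous, the values established on the interiors of the $I_i$ extend to the shared endpoints, which disposes of these last two wrinkles in a line.
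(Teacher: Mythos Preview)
Your proposal is correct and matches the paper's approach exactly: the paper's proof is the single line ``Immediate from \Cref{fact:triangle:relu:shape},'' and you have simply written out that immediacy in full (also silently repairing the evident typos $i2^k \to i2^{-k}$ and $2^{-k}-1 \to 2^k-1$ in the statement).
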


\acks{%
  The author is indebted to Joshua Zahl for help navigating semi-algebraic geometry and for a simplification of
  the multivariate case in \Cref{fact:main},
  and to
  Rastislav Telg{\'a}rsky
  for an introduction to this general topic via Kolmogorov's Superposition Theorem \citep{kolmogorov_nn}.
  The author further thanks
  Jacob Abernethy,
  Peter Bartlett,
  S{\'e}bastien Bubeck,
  and Alex Kulesza
  for valuable discussions.
}

\addcontentsline{toc}{section}{References}
\ifarxiv
\bibliographystyle{plainnat}
\fi
\bibliography{nn}

\appendix

\section{Deferred proofs}
\label{sec:proofs}

This appendix collects various proofs omitted from the main text.

\subsection{Deferred proofs from \Cref{sec:sa}}

The following mechanical proof shows that standard piecewise polynomial gates,
maximization/minimization gates, and decision trees are all semi-algebraic gates.

\begin{proofof}{\Cref{fact:sa:ex}}
  \begin{enumerate}
    \item
      To start, since $\sigma:\R\to\R$ is piecewise polynomial,
      $\sigma\circ q$ can be written
      \begin{align*}
        \sigma(q(z))
        &:= p_1(q(z))\1[q(z) \diamond_1 b_1]
        + \sum_{i=2}^{t-1} p_i(q(z))\1[-q(z) \ast_{i-1} -b_{i-1}]\1[q(z) \diamond_i b_{i}]
        \\
        &\qquad+ p_t(q(z)) \1[-q(z) \ast_t -b_t]
      \end{align*}
      where for each $i\in[t]$,
      $p_i$ has degree $\leq \beta$,
      $\diamond_i \in \{<,\leq\}$,
      $\ast_i = ``<"$ when $\diamond_i = ``\leq"$ and otherwise $\ast_i = ``\leq"$,
      and $b_i\in\R$.
      As such, setting $q_i(z) := q(z) - b_1$ whenever $\diamond_i = ``<"$ and
      $q_i(z) := b_i - q(z)$ otherwise,
      it follows that $\sigma\circ q$ is $(t,\alpha,\alpha\beta)$-sa.

    \item
      Since $\min_{i\in[r]} x_i = -\max_{i\in[r]} -x_i$, it suffices to handle the maximum case,
      which has the form
      \begin{align*}
        \phmax(v)
        = \sum_{i=1}^d p_i(v)
        \left(
          \prod_{j < i}
          \1[p_i(v) > p_j(v)]
        \right)
        \left(
          \prod_{j > i}
          \1[p_i(v) \geq p_j(v)]
        \right).
      \end{align*}
      Constructing polynomials $q_{i,j} = p_j - p_i$ when $j<i$ and $q_{i,j} = p_i - p_j$ when $j > i$,
      it follows that $\phmax$ is $(r(r-1), \alpha,\alpha)$-sa.

    \item
      First consider a $k$-dt $f$, wherein the proof follows by induction on tree size.
      In the base case $k=1$, $f$ is constant.
      Otherwise, there exist functions $f_l$ and $f_r$ which are respectively $l$- and $r$-dt
      with $l + r < k$, and additionally an affine function $q_f$ so that
      \begin{align*}
        f(x)
        &= f_l(x)\1[q_f(x) < 0] + f_r(x) \1[q_f(x) \geq 0]
        \\
        &=
        \sum_{j=1}^{m_l} p_j^{(l)}(v)
        \1[q_f(x) < 0]
        \left(\prod_{i \in L_j^{(l)}} \1[q_i^{(l)}(v) < 0]\right)
        \left(\prod_{i \in U_j^{(l)}} \1[q_i^{(l)}(v) \geq 0]\right)
        \\
        &\qquad
        +
        \sum_{j=1}^{m_r} p_j^{(r)}(v)
        \1[q_f(x) \geq 0]
        \left(\prod_{i \in L_j^{(r)}} \1[q_i^{(r)}(v) < 0]\right)
        \left(\prod_{i \in U_j^{(r)}} \1[q_i^{(r)}(v) \geq 0]\right).
      \end{align*}
      where the last step expanded the semi-algebraic forms of $f_l$ and $f_r$.
      As such, by combining the sets of predicate polynomials for $f_l$ and $f_r$ together with $\{q_f\}$
      (where the former two have cardinalities $\leq l$ and $\leq r$ by the inductive hypothesis),
      and unioning together the triples for $f_l$ and $f_r$ but extending the triples to
      include $\1[q_f < 0]$ for triples in $f_l$ and $\1[q_f \geq 0]$ for triples in $f_r$,
      it follows by construction that $f$ is $(k,1,0)$-semi-algebraic.

      Now consider a $(t,k)$-bdt $g$.  By the preceding expansion, each individual tree $f_i$
      is $(k,1,0)$-sa, thus their sum is $(tk,1,0)$
      by unioning together the sets of polynomials, triples, and adding together the expansions.
  \end{enumerate}
\end{proofof}

\subsection{Deferred proofs from \Cref{sec:apx}}

The first proof shows that a collection of partitions may be refined into a single partition
whose size is at most the total number of intervals across all partitions.
As discussed in the text, while the proof has a simple idea (one need only consider boundaries
of intervals across all partitions), it is somewhat painful since there is not consistent rule for
whether specific endpoints endpoints of intervals are open or closed.

\begin{proofof}{\Cref{fact:partition_combination}}
  If $k=1$, then the result follows with $B=A=A_1$ (since all intersections are empty),
  thus suppose $k\geq 2$.
  Let $\{a_1,\ldots,a_q\}$  denote the set of distinct boundaries of intervals of $A$,
  and iteratively construct the partition $B$ as follows,
  where the construction will maintain that $B_j$ is a partition whose boundary points
  are $\{a_1,\ldots a_j\}$.
  For the base case, set $B_0 := \{ \R \}$.
  Thereafter, for every $i \in [q]$, consider boundary point $a_i$;
  since the boundary points are distinct, there must exist a single interval $U\in B_{i-1}$
  with $a_i \in U$.  $B_i$ will be formed from $B_{i-1}$ by refining $U$ in one of the following
  two ways.
  \begin{itemize}
    \item
      Consider the case that each partition $A_l$ which contains the boundary point $a_i$
      has exactly two intervals meeting at $a_i$ and moreover the closedness properties are the same,
      meaning either $a_i$ is contained in the interval which ends at $a_i$, or it is
      contained in the interval which starts at $a_i$.
      In this case, partition $U$ into two intervals so that the treatment
      of the boundary is the same as those $A_l$'s with a boundary at $a_i$.
    \item
      Otherwise, it is either the case that some $A_l$ have $a_i$ contained in the interval ending at
      $a_i$ whereas others have it contained in the interval starting at $a_i$,
      or simply some $A_l$ have three intervals meeting at $a_i$: namely, the singleton interval
      $[a_l,a_l]$ as well as two intervals not containing $a_l$.
      In this case, partition $U$ into three intervals:
      one ending at $a_i$ (but not containing it),
      the singleton interval $[a_i,a_i]$,
      and an interval starting at $a_i$ (but not containing it).
  \end{itemize}
  (These cases may also be described in a unified way: consider all intervals of $A$ which have
  $a_i$ as an endpoint, extend such intervals of positive length to have infinite length
  while keeping endpoint $a_i$ and the side it falls on,
  and then refine $U$ by intersecting it with all of these intervals,
  which as above results in either 2 or 3 intervals.)

  Note that the construction never introduces more intervals at a boundary point than
  exist in $A$, thus $|B|\leq |A| = kt$.

  It remains to be shown that a union of intersections of elements of $A$ is a union of elements of $B$.
  Note that it suffices to show that intersections of elements of $A$ are unions of elements of $B$,
  since thereafter these encodings can be used to express unions of intersections of $A$ as unions of $B$.
  As such, consider any intersection $U$ of elements of $A$; there is nothing to show if $U$ is empty,
  thus suppose it is nonempty.  In this case, it must also be an interval (e.g., since intersections of convex
  sets are convex), and its endpoints must coincide with endpoints of $A$.
  Moreover, if the left endpoint of $U$ is open, then $U$ must be formed from an intersection which
  includes an interval with the same open left endpoint, thus there exists such an interval in $A$,
  and by the above construction of $B$, there also exists an interval with such an open left endpoint in $B$;
  the same argument similarly handles the case of closed left endpoints,
  as well as open and closed right endpoints,
  namely giving elements in $B$ which match these traits.
  Let $a_r$ and $a_s$ denote these endpoints.
  By the above construction of $B$, intervals with endpoints $\{a_j,a_{j+1}\}$
  for $j\in \{r,\ldots,s-1\}$ will be included in $B$,
  and since $B$ is a partition, the union of these elements will
  be exactly $U$.  Since $U$ was an arbitrary intersection of elements of $A$,
  the proof is complete.
\end{proofof}

Next, the tools of \Cref{sec:apx:lb} (culminating in the composition rule for semi-algebraic gates (\Cref{fact:sa_to_poly}))
are used to show crossing number bounds on semi-algebraic networks
and boosted decision trees.

\begin{proofof}{\Cref{fact:sa_crossing}}
  \begin{enumerate}
    \item
      This proof first shows
      $f\circ h$ is
      $(2^i t_i \alpha_i \prod_{j \leq i-1} t_j \alpha_j \beta_j^{i-j+1} k_j, \prod_{j\leq i} \beta_j)$-poly,
      and then relaxes this expression and applies \Cref{fact:poly_cr} to obtain the desired bound.

      First consider the case $d=1$ and $h$ is the identity map, thus $f\circ h = f$.
      For convenience, set
      \[
        A_i := \prod_{j \leq i} \alpha_j,
        \quad
        B_i := \prod_{j \leq i} \beta_j,
        \quad
        C_i := \prod_{j \leq i} \beta_j^{i - j + 1} = \prod_{j \leq i} B_j,
        \quad
        M_i := \prod_{j \leq i} m_j,
        \quad
        T_i := \prod_{j \leq i} t_j.
      \]
      The proof proceeds by induction on the layers of $f$, showing that
      each node in layer $i$ is
      $(2^iT_i A_i C_{i-1} M_{i-1}, B_i)$-poly.

      For convenience, first consider layer $i=0$ of the inputs themselves:
      here, node $i$ outputs the $i^\textup{th}$ coordinate of the input,
      and is thus affine and $(1,1)$-poly.
      Next consider layer $i>0$, where the inductive hypothesis grants
      that each node in layer $i-1$ is
      $(2^{i-1}T_{i-1} A_{i-1} C_{i-2} M_{i-2}, B_{i-1})$-poly.
      Consequently, since any node in layer $i$ is $(t_i,\alpha_i,\beta_i)$-sa,
      \Cref{fact:sa_to_poly} grants it is also
      $(2^{i-1}t_i T_{i-1} A_{i-1} C_{i-2} M_{i-2} m_{i-1}(1+ \alpha_i B_{i-1}), \beta_i B_{i-1})$-poly
      as desired (since $1+\alpha_i B_{i-1} \leq 2\alpha_i B_{i-1}$).

      Next, consider the general case $d \geq 1$ and $h:\R\to\R^d$ is an affine map.
      Since every coordinate of $h$ is affine (and thus $(1,1)$-poly),
      composing $h$ with every polynomial in the semi-algebraic gates of layer 1
      gives a function $g\in \cN_1((m_i,t_i,\alpha_i,\beta_i)_{i=1}^l)$ which is equal to $f\circ h$ everywhere
      and whose gates are of the same semi-algebraic complexity.
      As such, the result follows by applying the preceding analysis to $g$.

      Lastly, the simplified terms give
      $f\circ h$ is $((2t\alpha)^l \beta^{l(l-1)/2} \prod_{j \leq l-1} m_j, \beta^{l(l+1)/2})$-poly.
      Since $\ln(\cdot)$ is strictly increasing and concave and $m_l = 1$,
      \[
        \ln\left(\prod_{j \leq l-1} m_j\right)
        = \ln\left(\prod_{j \leq l} m_j\right)
        = \sum_{j \leq l} \ln(m_j)
        \leq  l \ln(m/l)
        = \ln ( (m/l)^l ).
      \]
      It follows that $f\circ h$ is
      $((2tm\alpha/l)^l \beta^{l(l-1)/2}, \beta^{l(l+1)/2})$-poly,
      whereby the crossing number bound follows by \Cref{fact:poly_cr}.
    \item
      Given any $k$-dt $f$, the affine function evaluated at each predicate may be composed with $h$ to yield
      another affine function, thus $f\circ h :\R\to\R$ is still a $k$-dt,
      and thus $(k,1,0)$-sa by \Cref{fact:sa:ex}.
      As such, by \Cref{fact:sa_to_poly} (with $g_1(z) = z$ as the identity map),
      $f\circ h$ is $(k,0)$-poly.  (Invoking \Cref{fact:sa_to_poly} without massaging in $h$ introduces a factor $d$.)
      Similarly, for a $(t,k)$-bdt $g$, $g\circ h:\R\to\R$ is another $(t,k)$-bdt after pushing $h$ into the
      predicates of the constituent trees, thus \Cref{fact:sa:ex} grants $g\circ h$ is $(tk,1,0)$-sa,
      and \Cref{fact:sa_to_poly} grants it is $(tk(1+1),0)$-poly.
      The desired crossing number bounds follow by applying \Cref{fact:poly_cr}.
  \end{enumerate}
\end{proofof}

Next, elementary computations verify that the three functions listed in \Cref{fact:triangles} are indeed $(1,[0,1])$-triangle.

\begin{proofof}{\Cref{fact:triangles}}
  \begin{enumerate}
    \item[1-2.]
      By inspection, $f(0) = f(1) = 0$ and $f(1/2) = 1$.
      Moreover, for $x\in[0,1/2]$, $f(x) = 2x$ meaning $f$ is increasing,
      and $x\in[1/2,1]$ means $f(x) = 2(1-x)$, meaning $f$ is decreasing.
      Lastly, the properties of $g$ follow since $f=g$.

    \item[3.]
      By inspection, $h(0) = h(1) = 0$ and $h(1/2) = 1$.
      Moreover $h$ is a quadratic, thus can cross 0 at most twice, and moreover $1/2$ is the unique
      critical point (since $g'$ has degree 1), thus $g$ is increasing on $[0,1/2]$
      and decreasing on $[1/2,1]$.
  \end{enumerate}
\end{proofof}

In the case of the ReLU $(1,[0,1])$-triangle function $f$ given in \Cref{fact:triangles},
the exact form of $f^k$ may be established as follows.  (Recall that this refined form
allows for the use of Lebesgue measure in \Cref{fact:main},
and also the repetition statement in \Cref{fact:sym_rep}.)

\begin{proofof}{\Cref{fact:triangle:relu:shape}}
  The proof proceeds by induction on the number of compositions $l$.
  For the base case $l=1$,
  \[
    f^1(z) = f(z) = \begin{cases}
      2z
      &\textup{when $z\in[0,1/2]$},
      \\
      2(1-z)
      &\textup{when $z\in(1/2,1]$},
      \\
      0
      &\textup{otherwise}.
    \end{cases}
  \]
  For the inductive step,
  first note for any $z\in[0,1/2]$,
  by symmetry of $f^l$ around 1/2 (i.e., $f^l(z) = f^l(1-z)$ by the inductive hypothesis),
  and by the above explicit form of $f^1$,
  \[
    f^{l+1}(z) = f^l(f(z)) = f^l(2z) = f^l(1-2z) = f^l(f(1/2-z)) = f^l(f(z+1/2)) = f^{l+1}(z+1/2),
  \]
  meaning the case $z \in (1/2,1]$ is implied by the case $z\in[0,1/2]$.
  Since the unique nonnegative integer $i_{l+1}$ and real $z_{l+1}\in [0,1)$ satisfy
  $2z = 2(i_{l+1} + z_{l+1})2^{-l-1} = (i_{l+1} + z_{l+1})2^{-l}$,
  the inductive hypothesis grants
  \[
    (f^l \circ f)(z) = f^l(2z) = \begin{cases}
      2z_{l+1}
      &\textup{when $0 \leq z_{l+1} \leq 1/2$},
      \\
      2(1 - z_{l+1})
      &\textup{when $1/2 < z_{l+1} < 1$},
    \end{cases}
  \]
  which completes the proof.
\end{proofof}

The proof of the slightly more general form of \Cref{fact:main}
is as follows;
it does not quite imply \Cref{fact:main}, since the constructed measure is not the Lebesgue measure even for
the ReLU-based $(1,[0,1])$-triangle function from \Cref{fact:triangles}.

\begin{proofof}{\Cref{fact:main:gen}}
  First note some general properties of $f^k$.
  By \Cref{fact:mono:gen:ub:2},
  $f^k$ is $(2^{k-1}, [0,1])$-triangle,
  which means there exist $s:= 2^{k}+1$ points $(z_i)_{i=1}^s$
  so that $f^k(z_i) = \1[\textup{$i$ is odd}]$,
  and moreover $f^k$ is continuous and equal to $1/2$ at exactly
  $2^k$ points (by the strict increasing/decreasing part of the triangle wave definition),
  which is a finite set of points and thus has Lebesgue measure zero.
  Taking $p_y:\R\to\R^{d}$ to be the map $p_y(z) = (z,y)$ where $y\in\R^{d-1}$,
  then $(h\circ p_y)(z) = h((z,y)) = f^k(z)$,
  thus letting $\cI$ denote the $2^k$ pieces within which $\widetilde{f^k}$ is constant,
  it follows that $\widetilde{h\circ p_y}$ is constant within the same set of pieces
  and thus $\Cr(h\circ p_y) = s$.

  Now consider the discrete case, where $\nu$ denotes the uniform
  measure over the $s$ points $(x_i)_{i=1}^s$ defined as $x_i := p_0(z_i)\in\R^d$.
  Further consider the two types of distance.
  \begin{itemize}
    \item
      Since $z_i < z_{i+1}$ and $\widetilde{f^k}(z_i) \neq \widetilde{f^k}(z_{i+1})$,
      then taking $(U_i)_{i=1}^{s}$ to denote the intervals of $\cI$ sorted by their left endpoint,
      $z_i \in U_i$ for $i \in [s]$.
      By \Cref{fact:mono:crossing_lb:2},
      \begin{align*}
        \int |\tilde h - \tilde g|d\nu
        &= \frac 1 {s} \sum_{i=1}^{s} | \tilde h(x_i) - \tilde g(x_i) |
        = \frac 1 {s} \sum_{i=1}^{s} | \widetilde{f^k}(z_i) - \widetilde{g\circ p_0} (z_i) |
        \\
        &\geq \frac 1 {s} \sum_{i=1}^{s} \1[ \forall z \in U_i
        \centerdot \widetilde{f^k}(z) \neq \widetilde{g\circ p_0}(z) ]
        \\
        &\geq \frac 1 2\left(1 - 2 \left(\frac {2^{k-2}}{s}\right)\right)
        \geq \frac 1 4.
      \end{align*}

    \item
      Since $f^k(z_i) \in \{0,1\}$, then $\widetilde{f^k}(z_i) \neq \widetilde{g}(x_i)$ implies
      $|f^k(z_i) - g(x_i)| \geq 1/2$,
      thus $\int_{[0,1]^d} |h - g|d\nu \geq \int_{[0,1]^d} |\tilde h -\tilde g|d\nu/2 \geq 1/8$.
  \end{itemize}

  Construct the continuous measure $\mu$ as follows,
  starting with the construction of a univariate measure $\mu_0$.
  Since $f^k$ is continuous, there exists a $\delta\in (0, \min_{i\in[s-1]} |z_i - z_{i+1}|/2)$
  so that $|f^k(z) - f^k(z_i)|\leq 1/4$ for any $i\in[s]$ and $z$ with $|z-z_i|\leq \delta$.
  As such, let $\mu_0$ denote the probability measure which places half of its mass
  uniformly on these $s$ balls of radius $\delta$ (which must be disjoint since $f^k$ alternates between
  0 and 1 along $(z_i)_{i=1}^s$),
  and half of its mass uniformly on the remaining subset of $[0,1]$.
  Finally, extend this to a probability measure $\mu$ on $[0,1]^d$ uniformly, meaning
  $\mu$ is the product of $\mu_0$ and the measure $\mu_1$ which is uniform over $[0,1]^{d-1}$.
  Now consider the two types of distances.
  \begin{itemize}
    \item
      By \Cref{fact:mono:crossing_lb:2},
      \begin{align*}
        \int |\tilde h - \tilde g|d\mu(x)
        &=
        \iint |\widetilde{f^k}(p_y(z)) - \tilde g(p_y(z))|d\mu_0(z) d\mu_1(y)
        \\
        &=
        \int \sum_{U \in \cI} \int \1[z\in U
        \land \widetilde{f^k}(z)) \neq \tilde g(p_y(z))] d\mu_0(z) d\mu_1(y)
          \\
        &\geq
          \int \frac 1 {2s}\sum_{U \in \cI}  \1[\forall z\in U\centerdot
          \widetilde{f^k}(z)) \neq \widetilde{g\circ p_y}(z)] d\mu_1(y)
          \\
          &\geq \frac 1 4\left(1 - 2 \left(\frac {2^{k-2}}{s}\right)\right)
          \geq \frac 1 8.
      \end{align*}

    \item
      For any $y\in\R^{d-1}$ and $U_i\in \cI$ (with corresponding $z_i \in U_i$),
      if $\widetilde{f^k}(z) \neq \widetilde{g\circ p_y}(z)$ for every $z\in U_i$,
      then
      \[
        \int_{U_i} | {f^k}(z) - g(p_y(z))|d\mu_0(z)
        \geq
        \int_{|z-z_i|\leq \delta} | {f^k}(z) - 1/2|d\mu_0(z)
        \geq
        \frac 1 4 \mu_0(\{ z\in U_i : |z-z_i|\leq \delta\})
        \geq
        \frac 1 {8s}.
      \]
      By \Cref{fact:mono:crossing_lb:2},
      \begin{align*}
        \int |h - g|d\mu(x)
        &=
        \iint |h(p_y(z)) - g(p_y(z))|d\mu_0(z) d\mu_1(y)
        \\
        &\geq
        \int \sum_{U \in \cI}
        \1[\forall z\in U\centerdot \widetilde{f^k}(z) \neq \tilde g(p_y(z))]
        \int_U | {f^k}(z) - g(p_y(z))|d\mu_0(z)
        d\mu_1(y)
        \\
        &\geq
        \int \frac 1 {8s}\sum_{U \in \cI}
        \1[\forall z\in U\centerdot \widetilde{f^k}(z) \neq \widetilde{g\circ p_y}(z) ]
        d\mu_1(y)
        \\
        &\geq \frac 1 {16}\left(1 - 2 \left(\frac {2^{k-2}}{s}\right)\right)
        \geq \frac {1}{32}.
      \end{align*}
  \end{itemize}
\end{proofof}

As a closing curiosity,
\Cref{fact:main:gen} implies the following statement regarding polynomials.

\begin{corollary}
  \label[corollary]{fact:poly_apx}
  For any integer $k\geq 1$, there exists a polynomial $h:\R^d\to\R$ with degree $2^k$
  and a corresponding continuous measure $\mu$ which is positive everywhere over $[0,1]^d$
  so that every polynomial $g:\R^d\to\R$ of degree $\leq 2^{k-3}$ satisfies
  $
    \int |h-g|d\mu \geq 1/32.  $
\end{corollary}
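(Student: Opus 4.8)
The plan is to read this off \Cref{fact:main:gen} by taking the $(1,[0,1])$-triangle function there to be the quadratic one, and then observing that a low-degree polynomial restricted to a line has few crossings, so it is precisely the kind of competitor ruled out by that theorem.

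First I would set $f(z) := 4z(1-z)$, which is $(1,[0,1])$-triangle by \Cref{fact:triangles}(3), and define $h:\R^d\to\R$ by $h(x) := f^k(x_1)$. Since $f$ is a degree-$2$ polynomial with nonzero leading coefficient and the degree of a composition of univariate polynomials is the product of the degrees, $f^k$ is a polynomial of degree exactly $2^k$, so $h$ is a polynomial of degree $2^k$. Feeding this $f$ into \Cref{fact:main:gen} then produces a continuous Borel probability measure $\mu$ that is positive on exactly $[0,1]^d$, together with the affine maps $p_y(z) := (z,y)$, such that every $g:\R^d\to\R$ with $\Cr(g\circ p_y)\leq 2^{k-2}$ for all $y\in\R^{d-1}$ satisfies $\int|h-g|\,d\mu\geq 1/32$.

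It then remains to check that any polynomial $g$ of degree $\leq 2^{k-3}$ meets this crossing hypothesis. Fixing $y\in\R^{d-1}$, the univariate map $z\mapsto g(p_y(z))=g(z,y)$ is a polynomial of degree $\leq 2^{k-3}$, hence $(1,2^{k-3})$-poly, so \Cref{fact:poly_cr} gives $\Cr(g\circ p_y)\leq 1+2^{k-3}\leq 2^{k-2}$, the last step holding once $k\geq 3$ (the residual small cases $k\leq 2$, where $g$ is forced to be constant, being degenerate). Applying the conclusion of \Cref{fact:main:gen} to such $g$ then yields the claimed bound. I do not expect a genuine obstacle here: the only points needing care are that iterating the degree-$2$ map $f$ really reaches degree $2^k$ rather than merely at most $2^k$, and that there is enough slack between $1+2^{k-3}$ and $2^{k-2}$ --- which is exactly why the exponent is taken to be $2^{k-3}$ rather than the slightly larger $2^{k-2}-1$ that the argument would in fact permit.
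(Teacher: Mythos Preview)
Your proposal is correct and follows essentially the same route as the paper: choose $f(z)=4z(1-z)$ from \Cref{fact:triangles}, set $h(x)=f^k(x_1)$ (a degree-$2^k$ polynomial), invoke \Cref{fact:main:gen}, and verify the crossing hypothesis for low-degree $g$ via \Cref{fact:poly_cr} and the bound $1+2^{k-3}\leq 2^{k-2}$. If anything you are slightly more careful than the paper, which applies $1+2^{k-3}\leq 2^{k-2}$ without flagging the small-$k$ edge cases.
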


  \begin{proof}
  Set $f(z) = 4z(1-z)$, which by \Cref{fact:triangles} is $(1,[0,1])$-triangle,
  thus $f^k$ is $(2^{k-1},[0,1])$-triangle with $\Cr(f^k) = 2^k+1$ by \Cref{fact:mono:gen:ub:2},
  and $f^k$ has degree $2^k$ directly; thus set $h(x) = f^k(x_1)$.
  Next, for any polynomial
  $g:\R^d\to\R$ of degree $\leq 2^{k-3}$,
  $g \circ p_y :\R\to\R$ is still a polynomial of degree $\leq 2^{k-3}$
  for every $y\in\R^{d-1}$ (where $p_y(z) = (z,y)$ as in \Cref{fact:main:gen}),
  and so \Cref{fact:poly_cr} grants
  $\Cr(g\circ p_y)\leq 1+2^{k-3} \leq 2^{k-2}$.
  The result follows by \Cref{fact:main:gen}.
\end{proof}

\subsection{Deferred proofs from \Cref{sec:vc}}

First, the proof of a certain VC lower bound which mimics the Gilbert-Varshamov bound;
the proof is little more than a consequence of Hoeffding's inequality.

\begin{proofof}{\Cref{fact:vc_lb}}
        For convenience, set $m:=\Sh(\cF;n)$,
  and let $(a_1,\ldots,a_m)$ denote these dichotomies (meaning $a_j \in \{0,1\}^n$),
  and with foresight set
    $
    \epsilon
        := \sqrt{\ln(m/\delta)/(2n)}
    $.
       Let $(Y_i)_{i=1}^n$ denote fair Bernoulli random labellings for each point,
  and note by symmetry of the fair coin that for any fixed dichotomy $a_j$,
  \begin{align*}
    \Pr\left[\frac 1 n \sum_{i=1}^n |(a_j)_i - Y_i| < 1/2 - \epsilon\right]
    = \Pr\left[\frac 1 n \sum_{i=1}^n Y_i < 1/2 - \epsilon\right].
  \end{align*}
  Consequently, by a union bound over all dichotomies and lastly by Hoeffding's inequality,
  \begin{align*}
    \Pr\left[\exists f\in \cF \centerdot \frac 1 n \sum_{i=1}^n |\tilde f(x_i) - Y_i| < 1/2 - \epsilon\right]
    &\leq
    \sum_{j=1}^m
    \Pr\left[\frac 1 n \sum_{i=1}^n |(v_j)_i - Y_i| < 1/2 - \epsilon\right]
    \\
    &=
    m
    \Pr\left[\frac 1 n \sum_{i=1}^n Y_i < 1/2 - \epsilon\right]
    \\
    &\leq
    m \exp(-2n\epsilon^2)
    \leq \delta,
  \end{align*}
  where the last step used the choice of $\epsilon$.
\end{proofof}

The remaining deferred proofs do not exactly follow the order of \Cref{sec:vc},
but instead the order of dependencies in the proofs.  In particular, to control
the VC dimension, first it is useful to prove \Cref{fact:poly_counts},
which is used to control the growth of numbers of regions as semi-algebraic gates are combined.

\begin{proofof}{\Cref{fact:poly_counts}}
  Fix some ordering $(q_1, q_2,\ldots,q_{|\cQ|})$ of the elements of $\cQ$,
  and for each $i \in [|\cQ|]$ define two functions $l_i(a) := \1[q_i(a) < 0]$ and $u_i(a) := \1[q_i(a) \geq 0]$,
  as well as two sets $L_i := \{a\in\R^p : l_i(a) = 1\}$ and $U_i := \{a\in\R^p : u_i(a) = 1\}$.
  Note that
  \[
    \cS := \Big\{ (\cap_{i \in A} L_i) \cap (\cap_{i \in B}) : A \subseteq [|\cQ|], B \subseteq [|\cQ|]\ \Big\} \setminus \{\emptyset\}.
  \]
  Additionally consider the set of sign patterns
  \[
    V := \left\{
      \left(l_1(a), u_i(a), \ldots, l_{|\cQ|}(a), u_{|\cQ|}(a)\right)
      :
      a\in\R^p
    \right\}.
  \]
  Distinct elements of $\cS$ correspond to distinct sign patterns in $V$:
  namely, for any $C\in\cS$, using the ordering of $\cQ$ to encode $A$ and $B$
  as binary vectors of length $|\cQ|$, the corresponding interleaved binary vector of length $2|\cQ|$
  is distinct for distinct choices of $(A,B)$.
  (For each $i$ that appears in neither $A$ nor $B$, there two possible encodings in $V$:
  having both coordinates corresponding to $i$ set to 1, and having them set to 0.
  On the other hand, a more succinct encoding based just on $(l_i)_{i=1}^{|\cQ|}$ fails
  to capture those sets arising from intersections of proper subsets of $\cQ$.)
  As such,
  making use of growth function bounds for sets of polynomials \citep[Theorem 8.3]{anthony_bartlett_nn},
  \[
    |\cS| \leq |V| \leq 2 \left(\frac {4e\alpha|\cQ|}{p}\right)^p.
  \]
\end{proofof}

Thanks to \Cref{fact:poly_counts}, the proof of the VC dimension bound \Cref{fact:pp_vc}
follows by induction over layers, effectively keeping track of a piecewise (regionwise?) polynomial
function as with the proof of \Cref{fact:sa_crossing} (but now in the multivariate case).

\begin{proofof}{\Cref{fact:pp_vc}}
  First note that this proof follows the scheme of a VC dimension proof for networks with piecewise
  polynomial activation functions \citep[Theorem 8.8]{anthony_bartlett_nn},
  but with \Cref{fact:poly_counts} allowing for the more complicated semi-algebraic gates,
  and some additional bookkeeping for the (semi-algebraic) shapes of the regions of the partition $\cS$.

  Let examples $(x_j)_{j=1}^n$ be given with $n \geq p$,
  let $m_i$ denote the number of nodes in layer $i$ (whereby $m_1+\cdots+m_l = m$),
  and let $f := F_\fG : \R^p \times \R^d \to \R$ denote the function evaluating the neural network (as in \Cref{sec:sa:nn}),
  where the two arguments are the parameters $w\in\R^p$ and the input example $x\in\R^d$.
  The goal is to upper bound the number of dichotomies
  \[
    K := \Sh(\cN(\fG);n) = \left|
    \left\{
      (\sgn(f(w,x_1)), \ldots, \sgn(f(w,x_n)))
      :
      w\in\R^p
    \right\}
    \right|.
  \]
  The proof will proceed by producing a sequence of partitions $(\cS_i)_{0=1}^l$ of $\R^p$ and
  two corresponding sequences of sets of polynomials $(\cP_i)_{i=0}^l$ and $(\cQ_i)_{i=0}^l$
  so that for each $i$,
  $\cP_i$ has polynomials of degree at most $\beta^i$,
  $\cQ_i$ has polynomials of degree at most $\alpha\beta^{i-1}$,
  and over any parameters $S\in\cS_i$,
  there is an assignment of elements of $\cP_i$ to nodes of layer $i$
  so that for each example $x_j$, every node in layer $i$ evaluates the corresponding fixed
  polynomial in $\cP_i$;
  lastly, the elements of $\cS_i$ are intersections of sets of the form
  $\{w\in\R^p : q(w) \diamond 0 \}$ where $q \in \cQ_i$ and $\diamond \in \{<,\geq\}$,
  and the partition $\cS_{i+1}$ refines $\cS_i$ for each $i$ (meaning for each $U \in \cS_{i+1}$ there exists $S\supseteq U$ with $S\in \cS_i$).
  Setting the final partition $\cS := \cS_l$,
  this in turn will give an upper bound on $K$,
  since the final output within each element of $\cS$ is a fixed polynomial of degree at most $\beta^l$,
  whereby the VC dimension of polynomials \citep[Theorem 8.3]{anthony_bartlett_nn} grants
  \begin{align}
    K
    \leq \sum_{S \in \cS}
    \left|
    \left\{
      (\sgn(f(w,x_1)), \ldots, \sgn(f(w,x_n)))
      :
      w\in S
    \right\}
    \right|
    \leq
    2 |\cS| \left(
      \frac
      {2en\beta^l}{p}
    \right)^p.
    \label{eq:pp_vc:1}
  \end{align}

  To start, consider layer 0 of the input coordinates themselves,
  a collection of $d$ affine maps.
  Consequently, it suffices to set $\cS_0 := \{ \R^p \}$,
  $\cQ_0 := \emptyset$, and $\cP_0$ to be the $nd$ possible coordinate maps corresponding to
  all $d$ coordinates of all $n$ examples.

  For the inductive step, consider some layer $i+1$.
  Restricted to any $S\in\cS_i$, the nodes of the previous layer $i$ compute fixed polynomials
  of degree $\beta^i$.
  Each node in layer $i+1$ is $(t,\alpha,\beta)$-sa,
  meaning there are $t$ predicates, defined by polynomials of degree $\leq\alpha$, which define regions
  wherein this node is a fixed polynomial.
  Let $Q_S$ denote this set of predicates,
  where $|Q_S|\leq tnm_{i+1}$ by considering the $n$ possible input examples and the $t$ possible predicates
  encountered in each of the $m_{i+1}$ nodes in layer $i+1$,
  and set
  $ 
    Q_{i+1} := Q_i \bigcup \left(\cup_{S\in \cS_i} Q_S\right).
  $
  By the definition of semi-algebraic gate, each node in layer $i+1$ computes a fixed polynomial when
  restricted to a region defined by an intersection of predicates which moreover are defined by $Q_{i+1}$.
  As such, defining $\cS_{i+1}$ as the refinement of $\cS_{i+1}$ which partitions each $S\in\cS_i$ according
  to the intersections of predicates encountered in each node,
  then \Cref{fact:poly_counts} on each $Q_S$ grants
  \begin{equation}
    |\cS_{i+1}|
    \leq \sum_{S \in \cS_i} |\{\textup{all nonempty intersections of $Q_S$}\}|
    \leq
    2|\cS_i| \left(\frac {4enm_{i+1}t \alpha \beta^{i}}{p}\right)^p,
    \label{eq:pp_vc:2}
  \end{equation}
  which completes the inductive construction.

  The upper bound on $K$ may now be estimated.
  First, $|\cS|$ may be upper bounded by applying \cref{eq:pp_vc:2} recursively:
  \begin{align*}
    |\cS|
    \leq
    |\cS_0|
    \prod_{i=1}^l
    \left(\frac {8enm_{i}t \alpha \beta^{i-1}}{p}\right)^p
    \leq
    \left(8enmt \alpha \beta^{l-1}\right)^{pl}.
  \end{align*}
  Continuing from \Cref{eq:pp_vc:1},
  \begin{align*}
    K
    &\leq
    2 |\cS| \left(
      \frac
      {2em\beta^l}{p}
    \right)^p
                         \leq
    \left(8enmt \alpha \beta^{l}\right)^{p(l+1)}.
  \end{align*}
  \\
  To compute $\VC(\cN(\fG))$, it suffices to find $N$ such that $\Sh(\cN(\fG);N) < 2^N$,
  which in turn is implied by $p(l+1)\ln(N) + p(l+1)\ln(8emt\alpha\beta^l) < N\ln(2)$.
  Since $\ln(N) = \ln(N/(2p(l+1)) + \ln(2p(l+1)) \leq N/(2p(l+1)) - 1 + \ln(2p(l+1))$
  and $\ln(2) - 1/2 > 1/6$,
  it suffices to show
  \[
    6p(l+1)\left(\ln(2p(l+1)) + \ln(8emt\alpha\beta^l)\right) \leq N.
  \]
  As such, the left hand side of this expression is an upper bound on $\VC(\cN(\fG))$.
\end{proofof}

The proofs of \Cref{fact:vc_sa} and \Cref{fact:sa_fit_few} from \Cref{sec:intro}
are now direct from \Cref{fact:pp_vc} and \Cref{fact:vc_lb}.

\begin{proofof}{\Cref{fact:vc_sa}}
  This statement is the same as \Cref{fact:pp_vc} with some details removed.
\end{proofof}

\begin{proofof}{\Cref{fact:sa_fit_few}}
  By the bound on $\Sh(\cN(\fG);n)$ from \Cref{fact:pp_vc},
  \begin{align*}
    n = \frac n 2 + \frac n 2
    &\geq 2\ln(1/\delta) + 4 pl^2 \ln(8emt\alpha\beta p(l+1)) + \frac n 2
    \\
    &\geq 2\ln(1/\delta) + 2 p(l+1) \ln(8emt\alpha\beta^l) + 2p(l+1)\left( \ln(p(l+1))) + \frac {n}{2p(l+1)} - 1 \right)
    \\
    &\geq 2\ln(1/\delta) + 2 p(l+1) \ln(8emt\alpha\beta^l) + 2p(l+1)\ln(n)
    \\
    &\geq 2\ln(1/\delta) + 2\ln(\Sh(\cN(\fG);n)).
  \end{align*}
  The result follows by plugging this into \Cref{fact:vc_lb}.
\end{proofof}

\subsection{Deferred proofs from \Cref{sec:bib}}

\begin{proofof}{\Cref{fact:sym_rep}}
  Immediate from \Cref{fact:triangle:relu:shape}.
\end{proofof}

\end{document}